\newtheorem{theorem}{Theorem}
\newtheorem{definition}{Definition}
\newtheorem{lemma}{Lemma}
\newtheorem{fact}{Fact}
\newtheorem{assumption}{Assumption}
\newcommand{\citet}[1]{\citeauthor{#1} \shortcite{#1}} 
\newcommand{\citep}{\cite}
\title{Incentivized Exploration for Multi-Armed Bandits under Reward Drift}
\author{
Zhiyuan Liu \thanks{Equal Contribution}\\
Department of Computer Science\\
University of Colorado, Boulder\\
zhiyuan.liu@colorado.edu	
\And 
Huazheng Wang \footnotemark[1] \thanks{Huazheng Wang is supported by Bloomberg Data Science Ph.D. Fellowship. }\\
Department of Computer Science\\
University of Virginia\\
hw7ww@virginia.edu	
\And
Fan Shen\\
Technology, Cybersecurity and Policy\\
University of Colorado, Boulder\\
fan.shen@colorado.edu
\AND
Kai Liu\\
Computer Science Division\\
Clemson University\\
kail@clemson.edu
\And
Lijun Chen\\
Department of Computer Science\\
University of Colorado, Boulder\\
lijun.chen@colorado.edu
}
\newcommand{\compilehidecomments}{false}
	\newcommand{\huazheng}[1]{}
	\newcommand{\huazheng}[1]{{\color{blue!50!black}  [\text{Huazheng:} #1]}}
\begin{document}
	\maketitle
	\begin{abstract}
		We study incentivized exploration for the multi-armed bandit (MAB) problem where the players receive compensation for exploring arms other than the greedy choice and may provide biased feedback on reward. We seek to understand the impact of this drifted reward feedback by analyzing the performance of three instantiations of the incentivized MAB algorithm: UCB, $\varepsilon$-Greedy, and Thompson Sampling. Our results show that they all achieve $\mathcal{O}(\log T)$ regret and compensation under the drifted reward, and are therefore effective in incentivizing exploration. Numerical examples are provided to complement the theoretical analysis.

	\end{abstract}
	
	\section{Introduction}

Multi-armed bandit (MAB) problem is a classical model for sequential decision making under uncertainty, and 
finds applications in many real world systems such as recommender systems \cite{li2010contextual,bouneffouf2012contextual}, search engine systems \cite{radlinski2008learning,yue2009interactively}  and cognitive radio networks \cite{gai2010learning}, to just name a few. In the traditional MAB model, the decision maker (the principal) who selects the arm to pull and the action performers (the players) who actually pull the arm are assumed to be the same entity. 
This is, however, not true for several important real world applications where the principal and players are different entities with different interests. Take the Amazon product rating as an example: Amazon (the principal) would like the customers (the players) to buy and try different products (arms) of certain type in order to identify the best product (i.e., exploration), while the customers are heavily influenced by the current ratings on the products and behave myopically, i.e., select the product that currently has the highest rating (i.e., exploitation). It is well known that such exploitation-only behavior can be far from the optimal \cite{bubeck2012regret,sutton2018reinforcement}.  In the traditional MAB setting, the principal, who is also the player, strives to find the optimal tradeoff between exploration and exploitation and execute it accordingly. When the principal and players are different entities, misaligned interests between them need to be reconciled in order to balance exploration and exploitation in an optimal manner. 

Incentivized learning has been proposed for the MAB problem to reconcile different interests between the principal and the players  \cite{frazier2014incentivizing,mansour2015bayesian,Wang2018MultiarmedBW,immorlica2019bayesian}. In order to incentivize exploration, the principal provides certain compensation to the player so that s/he will pull the arm other than the greedy choice that currently has the best empirical reward. The goal of the principal is to maximize the cumulative reward while minimizing the total compensation to the players. 


However, existing incentivized MAB models \cite{han2015incentivizing,Wang2018MultiarmedBW,immorlica2018incentivizing,liu2018incentivizing,hirnschall2018learning} assume that the players provide unbiased stochastic feedback on reward\footnote{We will use reward and feedback interchangeably in this paper}
even after they receive certain incentive from the principal. This assumption does not always hold in the real world scenarios: work based on industrial level experiments in \cite{martensen2000drivers,razak2016impact,ehsani2015effect} shows that the customers are inclined to give higher evaluation (i.e., increased reward) with incentive such as discount and coupon. The compensation could even be the primary driver of customer satisfaction \cite{martensen2000drivers,lee2005customer}. This drift in reward feedback may cause negative impact on the exploration and exploitation tradeoff, e.g., a suboptimal arm is mistaken as the optimal one because of the incentives and the players will keep pulling it even after the compensation is removed. This has been ignored in previous research.

In this paper, we aim to investigate the impact of drifted reward feedback in the incentivized MAB problem. Specifically, we consider a general incentivized exploration algorithm where the player receives a compensation that is the difference in reward between the principal's choice and the greedy choice, and provides biased feedback that is the sum of the true reward of an arm and a drift term that is a non-decreasing function of the compensation received for pulling this arm. 
We seek to answer the important question if the compensation scheme is effective in incentivizing exploration under reward drift from two intertwining aspects:  (1) if the algorithm is robust to drifted reward so that the sequential decisions based on biased feedback still enjoy a small regret, and (2) if the proposed incentive mechanism is cost-efficient to the principal. We analyze the regret and compensation for three instantiations of the algorithm where the principal employs UCB, $\epsilon$-Greedy, and Thompson Sampling, respectively.  Our analytical results, complemented by
numerical experiments, show that with drifted reward the proposed compensation scheme achieves both $\mathcal{O}(\log T)$ regret and  $\mathcal{O}(\log T)$ compensation, and is thus effective in incentivizing exploration. 

	\subsection{Related Work}
Incentivized learning has attracted a lot of attention since the work \cite{frazier2014incentivizing}. In \cite{frazier2014incentivizing}, the authors proposed a Bayesian incentivized model with discounted regret and compensation, 
and characterized the relationship between the reward, compensation, and discount factor. In \cite{mansour2015bayesian}, the authors studied the non-discount case and proposed an algorithm that has $\mathcal{O}(\sqrt{T})$ regret. In \cite{Wang2018MultiarmedBW}, the authors analyzed the non-Bayesian and non-discount reward case and showed $\mathcal{O}(\log T)$ regret and compensation for incentivized exploration based on simplified MAB algorithms. But all the models and analysis are under the assumption that the players' feedbacks are unbiased under compensation. In contrast, we consider biased feedback under compensation, and show that the incentivized exploration with reward drift can still achieve  $\mathcal{O}(\log T)$ regret and compensation. 


Related work also includes those in robustness of MAB under adversarial attack. In \cite{lykouris2018stochastic}, the authors proposed a multi-layer active arm elimination race algorithm for stochastic bandits with adversarial corruptions whose performance degrades linearly to the amount of corruptions. In \cite{feng2019intrinsic}, the authors studied strategic behavior of rational arms and show that UCB, $\epsilon$-Greedy, and Thompson sampling achieve $\mathcal{O}(\max \{ B, \log T\})$ regret bound  under any strategy of the strategic arms, where $B$ is the total budget across arms. On the other hand, in \cite{jun2018adversarial}, the authors constructed attacks by decreasing the reward of non-target arms, and showed that their algorithm can trick UCB and $\epsilon$-Greedy to pull the optimal arm only $o(T)$ times under an $\mathcal{O}(\log T)$ attack budget. All the modeled attacks are from exogenous sources, e.g., malicious users, while in our paper, the reward drift can have an interpretation as arising from attacks but generated endogenously by the incentivized exploration algorithm itself. 
	\section{Model, Notation, and Algorithm}

Consider a variant of the multi-armed bandit problem where a principal has $K$ arms, denoted by the set $[K] = \{1,\cdots,K\}$. The reward of each arm $i \in [K]$ follows a distribution with support $[0,1]$ and mean $\mu_{i}$ that is unknown. Without loss of generality, we assume that arm $1$ is the unique optimum with the maximum mean. Denote by $\Delta_i = \mu_1 - \mu_i$ the reward gap between arm $1$ and arm $i (i \neq 1)$, and let $\Delta = \min_i \Delta_i$. At each time $t = 1, \cdots, T$, a new player will pull one arm $I_t\in [K]$ and receive a reward $r_t$ that will fed back to the principal and other players. Let $n_i(t) = \sum_{\tau =1}^{t-1} \mathbb{I}(I_\tau = i)$ denote the number of times that arm $i$ has been pulled up to time $t$ and $\hat{\mu}_i(t) =\frac{1}{n_i(t)} \sum_{\tau=1}^{t-1} r_\tau\mathbb{I}(I_\tau = i)$ the corresponding empirical average reward, where the indicator function $\mathbb{I}(A)=1$ if $A$ is true and $\mathbb{I}(A)=0$ otherwise. 

In real world applications, the principal and players may exhibit different behaviors. The principal would like to see the players select the best arm and maximize the cumulative reward. On the other hand, the players may be heavily influenced by other players' feedback, e.g., the reward history of the arms, and behave myopically, i.e., pull the arm that currently achieves the highest empirical reward (exploitation). It is well known that such a myopic exploitation-only behavior can be far from  the optimum due to the lack of exploration \cite{sutton2018reinforcement}. The principal cannot pull the arm directly, but can provide certain compensation to incentivize the players to pull arms with suboptimal empirical reward (exploration). However, this compensation may affect the players' feedback \cite{martensen2000drivers}, which results in a biased reward history and disturbs both the principal and players' decisions. Specifically, we assume that at time $t$ there is a drift $b_t$ in feedback that is caused by compensation $x$, captured by an unknown function $b_t=f_t (x)$ with the following properties. 

\begin{assumption} \label{assumption:1}
	The reward drift function $f_t(x)$ is non-decreasing with $f_t(0) = 0$, and is Lipschitz continuous, i.e., for any $x$ and $y$, there exists a constant $l_t$ such that
	\begin{align}
	| f_t(x) - f_t(y) | \leq l_t |x-y|. 
	\end{align}
\end{assumption}
The biased feedback $r_t + b_t$  is then collected, and the principal and players know only the sum and cannot distinguish each part.

Let $l = \max_{t} l_t$ for later use.  Denote by $E_t^i=1$ the event that arm $i$ is pulled with compensation at time $t$ and $E_t^i = 0$ otherwise.  Denote  $B_i(t) = \sum_{\tau =1}^{t-1} b_{\tau} \mathbb{I}(E_{\tau}^{i} = 1)$ be the cumulative drift of arm $i$ up to time $t$ and $\bar{\mu}_i(t) = \hat{\mu}_i(t) + \frac{B_i(t)}{n_i(t)}$ be the corresponding average drifted reward. The general incentive mechanism and algorithm are described in Algorithm \ref{algo:1}. 
\begin{algorithm}
	\caption{Incentivized MAB under Reward Drift}\label{algo:1}
	\For{$t = 1,2,3,\cdots,T$}{
		The principal selects arm $I_t$ according to certain algorithm;\\
		The player will choose $G_t = \arg\max \bar{\mu}_i(t)$ w/o compensation; \\
		\uIf{$G_t = I_t$}
		{
			The principal does not provide compensation; \\
			The player pulls arm $I_t$ and receives reward $r_t$;
		}
		\Else{
			The principal provides compensation $\bar{\mu}_{G_t} - \bar{\mu}_{I_t}$; \\
			The player pulls arm $I_t$ and receives reward $r_t + b_t$, with $b_t = f_t(\bar{\mu}_{G_t} - \bar{\mu}_{I_t})$. 
		}
	Update average reward $\bar{\mu}_{I_t}$
	}
	
\end{algorithm}

We characterize the performance of the incentivized exploration algorithm in terms of two metrics -- the expected cumulative regret that quantifies the total loss because of not pulling the best arm,  and the cumulative compensation that the principal pays for incentivizing exploration: 
\begin{align}
\mathbb{E}(R(T)) &= \mathbb{E}(\sum_{t=1}^{T}(\mu_1 - \mu_{I_t})) = \sum_{i=2}^{N} \Delta_i \mathbb{E}(n_i(T+1)),  \nonumber\\
\mathbb{E}(C(T)) &= \mathbb{E}(\sum_{t=1}^{T}(\bar{\mu}_{I_t} - \bar{\mu}_{G_t})). \nonumber  
\end{align}
Notice that in Algorithm~1 the compensation and pulled arm are decided based on biased feedback which may not be an accurate reflection of an arm's reward, while the regret is in terms of the ``true'' reward that is unknown. We seek to answer the important question if the proposed compensation scheme is effective in incentivizing exploration from two intertwining aspects: (1) if the algorithm is robust to drifted reward so that the sequential decisions based on biased feedback still enjoy a sublinear regret, and (2) if the proposed incentive mechanism is cost-efficient to the principal. 
There are different arm selection strategies that the principal can employ, i.e., Step 2 of Algorithm 1. In the next section, in order to answer the above question, we analyze the cumulative regret and compensation under several typical multi-armed bandit algorithms such as UCB, $\epsilon$-Greedy, and Thompson Sampling.  


\section{Regret and Compensation Analysis}
In this section, we consider three instantiations of Algorithm 1 when the principal employs UCB, $\epsilon$-Greedy, and Thompson Sampling at Step 2, respectively.    As will be seen later, our analysis shows that the proposed compensation scheme is effective in incentivizing exploration under reward drift. 


\subsection{UCB policy}

Consider first the case where the principal applies the UCB policy, i.e., uses the sum  of average biased reward and upper confidence bound $\bar{\mu}_i(t) +  \sqrt{\frac{2\log t}{n_i(t)}}$ as the criterion to choose the arm to explore, as shown in Algorithm \ref{algo:2}. The main result is summarized in Theorem \ref{theroem:1}. \begin{algorithm}
	\caption{Incentivized UCB under Reward Drift}\label{algo:2}
	\For{$t = 1,2,3,\cdots,T$}{
		The principal selects arm $I_t = \arg\ \max_i\ \bar{\mu}_i(t) + \sqrt{\frac{2\log t}{n_i(t)}}  $;\\
		Steps 3-11 of Algorithm 1.
	}
\end{algorithm}
\begin{theorem}\label{theroem:1}
	For the incentivized UCB algorithm, the expected regret $R(T)$ and compensation $C(T)$ are bounded as follows: 
	\begin{align}
	\mathbb{E}(R(T)) &\leq  \sum_{i= 2}^{N} \frac{8(l\!+\!1)^2\log T}{\Delta_i} \!+\! \frac{\Delta_i(K\!-\!1)\pi^2}{3}, \\
	\mathbb{E}(C(T)) &\leq   \sum_{i=2}^{N} \frac{16(l\!+\!1) \log T}{\Delta_i} \! +\!  \frac{16(l\!+\!1) \log T}{\Delta} \nonumber \\
	&~~~~~~\!+\! 2\pi K \sqrt{\frac{2\log T}{3}}. 
	\end{align} 
\end{theorem}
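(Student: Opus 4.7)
The plan is to adapt the classical UCB analysis of Auer--Cesa-Bianchi--Fischer to account for the drift term $B_i(t)/n_i(t)$ in the biased empirical mean $\bar{\mu}_i(t)$, and then separately bound the cumulative compensation by decomposing it according to which arm the principal selects.

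The key enabling observation is that, thanks to the UCB selection rule, each per-step compensation is automatically controlled by the UCB confidence radius of the pulled arm: since $I_t$ has the largest UCB score, $\bar{\mu}_{I_t}(t)+\sqrt{2\log t/n_{I_t}(t)} \geq \bar{\mu}_{G_t}(t)+\sqrt{2\log t/n_{G_t}(t)}$, which gives $\bar{\mu}_{G_t}(t)-\bar{\mu}_{I_t}(t) \leq \sqrt{2\log t/n_{I_t}(t)}$. The Lipschitz assumption on $f_t$ with $f_t(0)=0$ then yields $b_t \leq l\sqrt{2\log t/n_{I_t}(t)}$ at every drifted step. Summing these pointwise bounds over the past pulls of arm $i$ and using $\sum_{k=1}^{n}1/\sqrt{k}\leq 2\sqrt{n}$ gives a cumulative bias bound of order $B_i(t)/n_i(t) \leq O(l\sqrt{\log t/n_i(t)})$, so the biased mean $\bar{\mu}_i(t)$ still concentrates around $\mu_i$ at a rate only a multiplicative factor of order $(l+1)$ worse than the vanilla UCB radius.

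For the regret, I would then reproduce the Auer-style argument with the inflated radius $(l+1)\sqrt{2\log t/n_i(t)}$. Setting $u=\lceil 8(l+1)^2\log T/\Delta_i^2\rceil$, the event $\{I_t=i,\,n_i(t)\geq u\}$ forces at least one of three concentration inequalities to fail --- Hoeffding on $\hat{\mu}_1$, Hoeffding on $\hat{\mu}_i$, and the drift-bias bound on $B_i(t)/n_i(t)$ --- each of which can be controlled with probability $O(1/t^2)$. A union bound over $t$ yields $\mathbb{E}[n_i(T+1)] \leq 8(l+1)^2\log T/\Delta_i^2 + \pi^2/3$, and multiplying by $\Delta_i$ and summing over suboptimal arms produces the stated regret bound.

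For the compensation, I would split $\sum_t(\bar{\mu}_{G_t}(t)-\bar{\mu}_{I_t}(t))$ according to $I_t$. In Case A, when $I_t=i\neq 1$, each compensation is at most $\sqrt{2\log t/n_i(t)}$; summing over the at most $O((l+1)^2\log T/\Delta_i^2)$ such pulls and applying $\sum_k 1/\sqrt{k}\leq 2\sqrt{n}$ produces the $O((l+1)\log T/\Delta_i)$ term. In Case B, when $I_t=1$ but $G_t\neq 1$, each compensation is bounded by $\sqrt{2\log t/n_1(t)}$; a concentration argument with the minimum gap $\Delta$ bounds the number of such greedy-disagreement events by $O(\log T/\Delta^2)$, whose square-root sum delivers the $O((l+1)\log T/\Delta)$ contribution. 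The residual $2\pi K\sqrt{2\log T/3}$ term comes from the failure events of the concentration inequalities, where compensations can be bounded trivially by $1$ and are summed against $O(1/t^2)$ failure probabilities. The hard part will be Case B: there we must control how often the drift-biased greedy rule disagrees with arm $1$ while simultaneously accounting for drift that arm $1$ itself may have accumulated from being pulled with compensation in earlier Case B steps. Resolving this mild circularity --- showing that arm $1$'s bias also remains $O(l\sqrt{\log t/n_1(t)})$ and that the greedy-disagreement count is only $O(\log T/\Delta^2)$ --- is the delicate step; everything else is routine once the drift-inflated confidence radius is in place.
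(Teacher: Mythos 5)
Your overall strategy is the same as the paper's: bound the per-step compensation by the confidence radius $\sqrt{2\log t/n_{I_t}(t)}$ via the UCB selection rule, push that through the Lipschitz drift to get $B_i(t)/n_i(t)\leq 2l\sqrt{2\log t/n_i(t)}$, run the Auer decomposition with the inflated radius to get $\mathbb{E}[n_i(T)]\leq 8(l+1)^2\log T/\Delta_i^2+\pi^2/3$, and then convert pull counts into compensation via $\sum_{m\le n}1/\sqrt{m}\leq 2\sqrt{n}$. Two remarks. First, a minor point: your third ``concentration inequality'' (the drift-bias bound) is not a probabilistic event to be controlled at rate $O(1/t^2)$ --- the bound $B_i(t)/n_i(t)\leq 2l\sqrt{2\log t/n_i(t)}$ holds deterministically, so the third event in the decomposition is simply the deterministic condition $2(l+1)\sqrt{2\log t/n_i(t)}\geq\Delta_i$, which is impossible once $n_i(t)\geq 8(l+1)^2\log T/\Delta_i^2$; only the two Hoeffding events contribute the $\pi^2/3$.

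Second, and more substantively: the ``delicate step'' you flag in Case B (compensated pulls of arm $1$) does not require a new concentration argument on greedy disagreement, and the circularity you worry about never arises. The paper's resolution is a one-line deterministic observation: a compensated pull of arm $1$ requires both $\bar{\mu}_1(t)\leq\bar{\mu}_{G_t}(t)$ and $\bar{\mu}_1(t)+\sqrt{2\log t/n_1(t)}\geq\bar{\mu}_{G_t}(t)+\sqrt{2\log t/n_{G_t}(t)}$, which together force $n_1(t)\leq n_{G_t}(t)$ with $G_t\neq 1$. Hence the number of such events is at most $\max_{i\neq 1}n_i(T)$, which is already controlled by the regret analysis --- note this gives a count of order $(l+1)^2\log T/\Delta^2$, not the $\log T/\Delta^2$ you conjectured, and the square-root summation then yields exactly the $16(l+1)\log T/\Delta$ term in the theorem. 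If you pursue your proposed route instead (directly bounding how often the drift-biased greedy rule disagrees with arm $1$), you would indeed have to track arm $1$'s accumulated bias and the argument becomes genuinely harder; I recommend adopting the counting argument above, after which your Case A and the failure-event bookkeeping complete the proof as you describe.
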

\begin{proof}
	Notice that compensation is incurred under the conditions:
	\begin{equation*}
	\begin{aligned}
	\bar{\mu}_{I_t}(t) &\leq \bar{\mu}_{G_t}(t), \\
	\bar{\mu}_{I_t}(t) + \sqrt{\frac{2\log t}{n_{I_t}(t)}}	&\geq \bar{\mu}_{G_t}(t) + \sqrt{\frac{2 \log t}{n_{G_t}(t)}}.
	\end{aligned}
	\end{equation*}
	By the second condition, 
the compensation 
\begin{equation}
\bar{\mu}_{G_t}(t) - \bar{\mu}_{I_t}(t) \leq \sqrt{\frac{2\log t}{n_{I_t}(t)}}, \label{eq:cb}
\end{equation}
and further by Assumption \ref{assumption:1}, the drift $b_t \leq l_t\sqrt{\frac{2\log t}{n_{I_t}(t)}}.$
The total drift $B_i(t)$ of arm $i$ can be bounded as follows (due to space limit, the details of inequality \eqref{bound:B} are provided in supplementary material): 
    \begin{align}
	B_i(t) &= \sum_{\tau =1}^{t} b_\tau \mathbb{I}(E_\tau^i = 1)
	 \leq 2l \sqrt{2n_i(t) \log t}. \label{bound:B}
	\end{align}
	For each sub-optimal arm $i \neq 1$, if this arm is pulled by the player at $t$ (with or without compensation), it must hold that 
	\begin{equation*}
	\hat{\mu}_{i}(t) + \frac{B_i(t)}{n_i(t)} + \sqrt{\frac{2\log t}{n_i(t)}} \geq \hat{\mu}_{1}(t) + \frac{B_1(t)}{n_1(t)} + \sqrt{\frac{2\log t}{n_1(t)}}.
	\end{equation*}
	So, the probability that arm $i$ is pulled by the player at time $t$ can be bounded by the following:
	\small
	\begin{equation*}
	\begin{aligned}
	&\Pr(I_t=i)\\
	\leq &\Pr\left(\!\!\hat{\mu}_{i}(t) \!+\! \frac{B_i(t)}{n_i(t)} \!+\! \sqrt{\frac{2\log t}{n_i(t)}} \!\geq\! \hat{\mu}_{1}(t)\! +\! \frac{B_1(t)}{n_1(t)} \!+\! \sqrt{\frac{2\log t}{n_1(t)}}\right) \\
	\leq& \Pr\left(\!\hat{\mu}_{i}(t) \!+\! (2l\!+\!1)\sqrt{\frac{2\log t}{n_i(t)}} \!\geq\! \hat{\mu}_{1}(t)\! + \!\frac{B_1(t)}{n_1(t)} \!+\! \sqrt{\frac{2\log t}{n_1(t)}}\right) \\
	\leq& \Pr\left(\!\hat{\mu}_{i}(t) \!+\! (2l\!+\!1)\sqrt{\frac{2\log t}{n_i(t)}} \!\geq\! \hat{\mu}_{1}(t) \!+\! \sqrt{\frac{2\log t}{n_1(t)}}\right), 
	\end{aligned}
	\end{equation*}
	\normalsize
	where the second inequality is due to the bound \eqref{bound:B} on cumulative drift. Similar to the analysis in \cite{auer2002finite}, notice that if the event $\!\hat{\mu}_{i}(t) \!+\! (2l\!+\!1)\sqrt{\frac{2\log t}{n_i(t)}} \!\geq\! \hat{\mu}_{1}(t) \!+\! \sqrt{\frac{2\log t}{n_1(t)}}$ happens, 
	 one of the following three events must happen:
	\small
	\begin{align*}
	X_i(t) &: {\hat{\mu}_i(t) \geq \mu_i + \sqrt{\frac{2\log t}{n_i(t)} }}, \\
	Y_1(t) &: {\hat{\mu}_1(t) \leq  \mu_1 - \sqrt{\frac{2\log t}{n_1(t)} }}, \\
	Z_i(t) &: {2(l+1)\sqrt{\frac{2\log t}{n_i(t)}}} \geq  \Delta_i.
	\end{align*}
	\normalsize
	Therefore, $\Pr(I_t=i) \leq \Pr (X_i(t))+ \Pr (Y_1(t))+\Pr (Z_i(t))$. 
	By the Chernoff-Hoeffding's inequality \cite{hoeffding1994probability}, 
	\begin{align*}
	\Pr(X_i(t)) \leq \frac{1}{t^2},~~~~~~\Pr(Y_1(t)) \leq \frac{1}{t^2},
	\end{align*}
	and their sum from $t=1$ to $T$ is bounded by $\frac{\pi^2}{3}$. If $n_i(t) \geq \frac{8(l+1)^2 \log T}{\Delta_i^2}$, the event $Z_i(t)$ will not happen, and thus  $\sum_{t=1}^{T} \Pr (Z_i(t))\leq \frac{8(l+1)^2 \log T}{\Delta_i^2}$.  We can bound $\mathbb{E}[n_i(T)]$ as follows: 
	\begin{align*}
	\mathbb{E}(n_i(T))& 
	= \sum_{t=1}^{T} \Pr (I_t = i)\\
	&\leq \sum_{t=1}^{T} \left(\Pr (X_i(t))+ \Pr (Y_1(t))+\Pr (Z_i(t))\right)\\
	& \leq  \frac{8(l+1)^2 \log T}{\Delta_i^2}+ \frac{\pi^2}{3}.
	\end{align*} 
	So, the expected regret 
	$$\mathbb{E}(R(T)) \leq  \sum_{i= 2}^{N} \frac{8(l+1)^2\log T}{\Delta_i} + \frac{\Delta_i(K-1)\pi^2}{3}.$$
	
	The calculation of compensation is a bit different from that of regret since compensation can be incurred even if the best arm is pulled. The player will be compensated to pull arm 1 only when  
	$$\bar{\mu}_1(t) \leq  \bar{\mu}_i(t),$$
	$$\bar{\mu}_1(t) + \sqrt{\frac{2\log t}{n_1(t)}} \geq \bar{\mu}_i(t) + \sqrt{\frac{2\log t}{n_i(t)}},$$ 
which requires $n_1(t) \leq n_i(t)$. 
So, the average number of times when the players are compensated to pull arm $1$ is smaller than $\max_{i\neq 1} \mathbb{E}(n_i(T))$. 
Denote by $C_i(t)$ the total compensation the players have received to pull arm $i$ up to time $t$. Recall the bound \eqref{eq:cb}, we can bound the total compensation as follows: 
	\begin{align}
	\mathbb{E}(C(T)) &= \mathbb{E}\left(C_1(T) + \sum_{i=2}^{K}C_i(T)\right) \nonumber\\
	& \leq  \sum_{m =1 }^{\max_{i \neq 1} \mathbb{E}(n_i(T))} \!\!\!\!\!\!\sqrt{\frac{2\log T}{m}} + \sum_{i=2}^{K} \sum_{m = 1}^{\mathbb{E}(n_i(T))}\!\!\!\!\!\sqrt{\frac{2\log T}{m}} \nonumber\\
	& \leq \frac{16(l+1) \log T}{\Delta} + 2K\pi \sqrt{\frac{2\log T}{3}} \nonumber\\\
	&~~~~~+ \sum_{i=2}^{K} \frac{16(l+1) \log T}{\Delta_i}. \nonumber
	\end{align}        
\end{proof}  


\subsection{$\varepsilon$-Greedy policy}

We now consider the case where the principal uses the $\varepsilon$-Greedy  policy as shown in Algorithm \ref{algo:3}, with the choice of exploration probability $\varepsilon_t$ from  that shows diminishing $\varepsilon_t$ achieves better performance. Algorithm \ref{algo:3} involves a random exploration phase (Step 3), and its analysis is more involved. Recall that the ``true'' reward has a normalized support of $[0, 1]$, we therefore assume that the drifted reward $r_t+b_t$ is projected onto $[0,1]$. This assumption is also consistent with real world applications such as  Amazon and Yelp as  their rating systems usually have lower and upper bounds. 

\begin{algorithm}
	\caption{Incentivized $\varepsilon$-Greedy under Reward Drift}\label{algo:3}
	\For{$t = 1,2,3,\cdots,T$}{
		Let $\varepsilon_t = \min(1,\frac{cK}{t})$;\\
		With probability $1-\varepsilon_t$, the principal chooses arm $I_t = \arg\ \max_i\ \bar{\mu}_i(t)$; with probability $\varepsilon_t$, the principal uniformly selects an arm $I_t \in [K]$; \\
		Steps 3-11 of Algorithm 1 with $[r_t + b_t]_0^1$ where $[\cdot]^1_0$ denotes the projection onto $[0, 1]$
	} 
\end{algorithm}
\begin{theorem} \label{theroem:2}
	For the incentivized $\varepsilon$-Greedy algorithm with $\varepsilon_t = \min \{1,\frac{cK}{t}\}$ and $ c \geq \frac{36}{\Delta}$, with a high probability the expected regret $R(T)$ and compensation $C(T)$ are bounded as follows: 
	\begin{align}
	\mathbb{E}(R(T)) &\leq  \sum_{i=2}^{K} cS_i(l)(\log T +1) + c(K\!-\!1)(K+\frac{\pi^2}{6}), \label{eq:egrb}  \\
	\mathbb{E}(C(T)) &\leq  \max(l,1)(c+\sqrt{3c})K\log T,
	\end{align} 
	where $S_i(l) = 1.5 + 3(1+\sqrt{3/c})l + 18c/\Delta_i^2$.
\end{theorem}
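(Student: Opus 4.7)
The plan is to adapt the $\varepsilon$-Greedy analysis of Auer, Cesa-Bianchi and Fischer to the drifted feedback setting. The key structural observation is that whenever the principal is in the greedy phase (probability $1-\varepsilon_t$) her choice $I_t=\arg\max_i \bar\mu_i(t)$ automatically coincides with the player's greedy choice $G_t$, so no compensation is paid. Compensation is therefore charged only during the forced exploration phase (probability $\varepsilon_t$), which is also the phase that provides the high-probability lower bound on $n_j(t)$ we will need to wash out the drift.

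For the regret, I would decompose, for each suboptimal arm $i$,
\[
\Pr(I_t=i)\ \leq\ \frac{\varepsilon_t}{K}\ +\ (1-\varepsilon_t)\,\Pr\!\bigl(\bar\mu_i(t) \geq \bar\mu_1(t)\bigr).
\]
The first piece summed over $t\leq T$ contributes $c(1+\log T)$ pulls per arm and yields the explicit $cS_i(l)\log T + c(K\!-\!1)K$ contributions. For the greedy piece I would substitute $\bar\mu_j=\hat\mu_j + B_j(t)/n_j(t)$, use the projection onto $[0,1]$ to bound each per-step drift $b_t$ by $1$, and split $\{\bar\mu_i\geq\bar\mu_1\}$ into three failure events: (a) $\hat\mu_i \geq \mu_i + \Delta_i/2 - l\delta$, (b) $\hat\mu_1 \leq \mu_1 - \Delta_i/2 + l\delta$, and (c) a bad-count event $n_j(t) < \tfrac{1}{2}\sum_{s\leq t}\varepsilon_s/K$ for some $j\in\{1,i\}$. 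Hoeffding's inequality controls (a) and (b) once $n_j(t)$ is at least of order $c\log T/\Delta_i^2$, and a Chernoff/Bernstein bound on the Bernoulli sum $n_j^{\mathrm{exp}}(t)$ controls (c). The three contributions inside $S_i(l)=1.5+3(1+\sqrt{3/c})l+18c/\Delta_i^2$ correspond respectively to residual tails, the drift-induced gap slack of width $l\delta = l\sqrt{3/c}\,\Delta_i/2$, and the usual Hoeffding/gap term, while $c(K\!-\!1)\pi^2/6$ absorbs the summed tail probabilities.

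For the compensation, each per-step compensation $\bar\mu_{G_t}-\bar\mu_{I_t}$ is bounded by $\max(l,1)$ (by the projection together with Assumption~\ref{assumption:1}), and charges occur only in the exploration phase. Hence $C(T)\leq \max(l,1)\cdot n^{\mathrm{exp}}(T)$, where $n^{\mathrm{exp}}(T)$ is a sum of independent Bernoullis with mean at most $cK\log T$. A one-sided Chernoff inequality then yields, with high probability, $n^{\mathrm{exp}}(T) \leq (c+\sqrt{3c})K\log T$, giving the stated bound.

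The main obstacle is the drift-corrected Hoeffding step: because $\bar\mu_j$ carries the cumulative drift $B_j/n_j$, Hoeffding cannot be invoked until that bias is absorbed into a gap slack, and one must argue the remaining slack is still positive. This is precisely where the hypothesis $c\geq 36/\Delta$ enters: combined with the high-probability lower bound $n_j(t)$ of order $c\log t$ from step (c), it forces the residual drift contribution to shrink below $\Delta_i/3$, so the Hoeffding margin of $\Delta_i/2$ survives and the failure probabilities sum to $\mathcal{O}(1)$. This is also the step that forces the theorem to be stated in a high-probability form rather than purely in expectation.
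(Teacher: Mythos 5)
Your overall architecture matches the paper's: compensation can only be charged in the forced-exploration phase (in the greedy phase $I_t=G_t$ by construction), the exploration counts are controlled by the Chernoff-type bound on the Bernoulli sums (Lemma 9 of Agarwal et al., quoted as Lemma~3 in the supplement), and the compensation bound follows from ``at most $1$ per compensated pull'' times the $(c+\sqrt{3c})K(\log T+1)$ exploration budget. That part is fine and is exactly what the paper does.

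The gap is in how you neutralize the cumulative drift $B_i(t)/n_i(t)$ before invoking Hoeffding. You propose to lower-bound $n_i(t)$ by the exploration count $\tfrac{1}{2}\sum_{s\le t}\varepsilon_s/K=\Theta(c\log t)$ and claim the residual drift then shrinks below $\Delta_i/3$, crediting the hypothesis $c\ge 36/\Delta$. But every compensated pull of arm $i$ is itself an exploration pull, and each contributes up to $l$ to $B_i(t)$ (the per-step compensation is at most $1$ after projection, so $b_t\le l$); hence $B_i(t)$ grows like $l(c+\sqrt{3c})\log t$. Dividing by your count lower bound $\tfrac{c}{2}\log t$ leaves a ratio of order $2l(1+\sqrt{3/c})$ --- constant in $t$ \emph{and} in $c$, since numerator and denominator both scale with $c$ --- which exceeds $\Delta_i$ as soon as $l$ is of order $\Delta_i$ or larger. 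Your margins $\Delta_i/2-l\delta$ with $l\delta=l\sqrt{3/c}\,\Delta_i/2$ then go negative and Hoeffding yields nothing; the hypothesis $c\ge 36/\Delta$ does not rescue this (in the paper it feeds only the concentration lemma for $\hat{\mu}_i$, not the drift control). The missing idea is the paper's $l$-dependent cutoff $L=3l\,\overline{m}_i(T)/\Delta_i=\Theta(lc\log T/\Delta_i)$ on $n_i(t)$: the at most $L$ pulls with $n_i(t)\le L$ are charged directly to the regret (this is precisely the $3(1+\sqrt{3/c})\,l$ term in $S_i(l)$ that you misattribute to a ``gap slack''), and once $n_i(t)\ge L$ one has $B_i(t)/n_i(t)\le l\overline{m}_i(t)\Delta_i/(3l\overline{m}_i(T))\le \Delta_i/3$ on the high-probability event, so a clean margin of $\Delta_i/3$ survives for every $l$. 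Without such an $l$-dependent threshold your argument only covers the regime where $l$ is small relative to $\Delta_i$, which is strictly weaker than the theorem.
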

\begin{proof}
	Since the biased feedback lies in the interval $[0,1]$, the drift $b_t \leq l_t(\bar{\mu}_{G_t} - \bar{\mu}_{I_t}) \leq  l_t.$ 
A compensation for pulling arm $i$ will be incurred only when the arm is chosen by the principal to explore. By Lemma 2 \cite{agarwal2014taming} in supplementary material, the number of explorations that arm $i$ can receive up to time $t$ is bounded by
	\begin{align}
	m_i(t) \leq c(\log t + 1) + \sqrt{3c\log \frac{K}{\delta}(\log t +1)} \label{eq:neb} 
	\end{align}
	with a probability of at least $1-\delta$. When $t$ is large enough such that $\log t \geq \log \frac{K}{\delta} - 1$, the right hand side of \eqref{eq:neb} is upper bounded by 
	\begin{align*}
	\overline{m}_i(t) = (c+\sqrt{3c})(\log t +1), 
	\end{align*} 
	and the total drift  $B_i(t)$ on arm $i$ up to time $t$ is upper bounded by $l\overline{m}_i(t)$ with a probability of at least $1-\delta$. 
	
	Let $L = \frac{3l\overline{m}_i(T)}{\Delta_i}$ that is chosen to facilitate the analysis. We can bound $\mathbb{E}(n_i(T))$ as follows: 
	\small
	\begin{align*}
&\mathbb{E}(n_i(T)) \\
\leq& \sum_{t=1}^{T} \frac{\varepsilon_t}{K} + \mathbb{E}\left(\sum_{t=1}^{T} (1-\varepsilon_t) \mathbb{I}(I_t = i, n_i(t) \leq L)\right) \\
&+ \mathbb{E}\left(\sum_{t=1}^{T} (1-\varepsilon_t) \mathbb{I}(I_t = i, n_i(t) \geq L)\right) \\
 \leq &  \underbrace{\sum_{t=1}^{T} \frac{\varepsilon_t}{K} + L}_{A} + \mathbb{E}\left(\sum_{t=1}^{T}\! \mathbb{I}(I_t \!=\! i, n_i(t) \!\geq\! L)\!\right) \\
\leq  & A \!+\! \sum_{t=1}^{T} \Pr\!\left( \!\hat{\mu}_i(t)\! +\! \frac{B_i(t)}{n_i(t)} \!\geq\! \hat{\mu}_1(t) \!+\! \frac{B_1(t)}{n_1(t)}\!,\! n_i(t) \!\geq \!L\!\!\right) \\
\leq & A +  \sum_{t=1}^{T} \Pr\left( \hat{\mu}_i(t) +  \frac{\Delta_i}{3} \geq \hat{\mu}_1(t)\right) \\
\leq & A \!+\! \sum_{t=1}^{T}\!\Pr\!\left(\! \hat{\mu}_i(t)  \!\geq \!\mu_i \!+ \! \frac{\Delta_i}{3}\!\right) \!+\! \sum_{t=1}^{T}\!\Pr\!\left(\! \hat{\mu}_1(t)  \!\leq\!  \mu_1 \!- \!\frac{\Delta_i}{3}\!\!\right)\!,
\end{align*}  
\normalsize
	where the second last inequality is due to
	\begin{align*}
	\frac{B_i(t)}{n_i(t)} \leq \frac{g\overline{m}_i(t)}{3g\overline{m}_i(T)/\Delta_i} \leq  \frac{\Delta_i}{3},
	\end{align*}
	and the last inequality uses the fact that $\mu_i=\mu_1-\Delta_i$. 
	By Lemma 3 in supplementary material, when $c \geq \frac{36}{\Delta_i}$, we have
	\begin{align*}
	&~~~~~\sum_{t=1}^{T}\!\Pr\!\left(\! \hat{\mu}_i(t)  \!\geq \!\mu_i \!+ \! \frac{\Delta_i}{3}\!\right) \!+\! \sum_{t=1}^{T}\!\Pr\!\left(\! \hat{\mu}_1(t)  \!\leq\!  \mu_1 \!- \!\frac{\Delta_i}{3}\!\!\right)\! \\
	& \leq (\frac{c}{2} + \frac{18}{\Delta_i^2}) \log T + c(K+\frac{\pi^2}{\Delta_i^2}) + \frac{18}{\Delta_i^2}.
	\end{align*}
	We can also show that $A \!\leq \!c(1 \!+\! 3g(1\!+ \!\sqrt{3/c}))(\log T \!+ \!1)$, and further obtain the bound \eqref{eq:egrb} on expected regret after some straightforward mathematical manipulations. 
	
	For the compensation analysis, notice again that the drifted reward is in $[0, 1]$, so the compensation at each time is less than $1$ and the total compensation the players receive to pull arm $i$ is bounded by the bound $\overline{m}_i$ on the number of explorations it receives. To be consistent with the case with no drift, we write the bound on expected compensation as 
	\begin{align*}
	\mathbb{E}(C(T)) &\leq  \max(l,1)(c+\sqrt{3c})K(\log T + 1).
	\end{align*}

	
\end{proof}

\subsection{Thompson Sampling}

Consider now the case where the principal uses Thompson Sampling as shown in Algorithm \ref{algo:4}.  Thompson Sampling starts with a (prior) distribution on  each arm's reward, and updates the distribution after the arm being pulled. At each time, the principal samples the reward of each arm according to its posterior distribution, and then selects the arm with the highest sample reward.  In this paper, we consider Gaussian prior adopted from \cite{agrawal2013further} since the often used Beta priors are usually for binary reward feedback.

Before we analyze the performance of Algorithm \ref{algo:4}, we first introduce some definitions and notations that are adopted from \cite{agrawal2017near,agrawal2013further}.
\begin{algorithm}
	\caption{Incentivized Thompson Sampling under Reward Drift}\label{algo:4}
	\For{$t = 1,2,3,\cdots,T$}{
		The principal independently samples $\theta_i(t)$ from distribution $\mathcal{N}(\bar{\mu}_i(t),\frac{1}{n_i(t)+1})$ and selects arm $I_t = \arg\ \max_i\ \theta_i(t)$;\\
		Steps 3-11 of Algorithm 1. 
	} 
\end{algorithm}
\begin{definition}
	For each arm $i$, we denote two thresholds $x_i$ and $y_i$ such that $\mu_{i} \leq x_i \leq y_i \leq \mu_{1}$. $E_i^{\mu}(t)$ denotes the event $\bar{\mu}_i(t) \leq x_i$ and $E_i^{\theta}(t)$ the event $\theta_i(t) \leq y_i$. Also, let $p_{i,t} = \Pr(\theta_1(t) \geq  y_i | \mathcal{F}_{t-1})$ where $\mathcal{F}_{t-1}$ is the history of plays until time $t$.
\end{definition}
\begin{definition}
	For two arms $i$ and $j$, if $\bar{\mu}_i(t) \neq \bar{\mu}_j(t)$, there exists a constant $\Delta_{ij}$ such that $|\bar{\mu}_i(t) - \bar{\mu}_j(t)| \geq \Delta_{ij}$. Let $\underline{\Delta} = \min \Delta_{ij}$.   
\end{definition}

We  have the following result on the frequency $m_i(T)$ of compensation the players receive for pulling each arm $i$ when considering the concentration inequality of Gaussian distribution \cite{abramowitz1965handbook}. 
\begin{lemma}{\label{lemma:3}}
	The expected frequency $\mathbb{E}(m_i(T))$ of compensation for pulling arm $i$ is bounded by $\frac{2\log T}{\underline{\Delta}^2}$. 
\end{lemma}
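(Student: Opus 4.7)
The plan is to bound $\mathbb{E}(m_i(T))$ by the standard ``enough samples kill the tail'' argument for Thompson Sampling: once arm $i$ has been pulled with compensation sufficiently many times, the Gaussian sample $\theta_i(t)$ concentrates tightly enough around $\bar{\mu}_i(t)$ that it can only rarely exceed $\theta_{G_t}(t)$ again. First I would identify when compensation is actually incurred. By the combination of Algorithm~\ref{algo:4} and the compensation rule in Algorithm~\ref{algo:1}, arm $i$ is compensated at time $t$ iff $I_t=i$ (the TS sample $\theta_i(t)$ is the largest) while $G_t=\arg\max_j\bar{\mu}_j(t)\neq i$; in particular one has $\theta_i(t)\geq\theta_{G_t}(t)$, and by the definition of $\underline{\Delta}$, $\bar{\mu}_{G_t}(t)-\bar{\mu}_i(t)\geq\underline{\Delta}$.

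Next I would apply the Gaussian concentration inequality cited in the paper. Conditioned on $\mathcal{F}_{t-1}$, the quantity $\theta_i(t)-\theta_{G_t}(t)$ is Gaussian with mean $\bar{\mu}_i(t)-\bar{\mu}_{G_t}(t)\leq -\underline{\Delta}$ and variance $\sigma_t^2=\frac{1}{n_i(t)+1}+\frac{1}{n_{G_t}(t)+1}$. Writing $F_i(t)$ for the event of compensation on arm $i$ at time $t$ and using $\Pr(Z\geq x)\leq e^{-x^2/2}$ for standard Gaussian $Z$, one obtains
\begin{equation*}
\Pr(F_i(t)\mid\mathcal{F}_{t-1})\leq\Pr\!\bigl(\theta_i(t)-\theta_{G_t}(t)\geq 0\mid\mathcal{F}_{t-1}\bigr)\leq\exp\!\left(-\frac{\underline{\Delta}^2}{2\sigma_t^2}\right).
\end{equation*}
The key structural observation is that every previous compensation on arm $i$ contributed an actual pull of arm $i$, so $n_i(t)\geq m_i(t)$; hence the sampling variance on arm $i$ is already tightened by the very compensations we are trying to count.

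To conclude I would split $\mathbb{E}(m_i(T))=\sum_{t=1}^{T}\Pr(F_i(t))$ by the threshold $n_i(t)\gtrless\frac{2\log T}{\underline{\Delta}^2}$. In the low-count phase, at most $\frac{2\log T}{\underline{\Delta}^2}$ compensations can occur simply because $n_i$ is incremented by each of them. In the high-count phase, $\frac{1}{n_i(t)+1}\leq\frac{\underline{\Delta}^2}{2\log T}$, so the tail bound above yields $\Pr(F_i(t)\mid\mathcal{F}_{t-1})\leq 1/T$, contributing only $O(1)$ to the sum. Adding the two phases gives the claimed bound (up to lower-order terms absorbed into the stated $\frac{2\log T}{\underline{\Delta}^2}$).

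The main obstacle is the second variance term $\frac{1}{n_{G_t}(t)+1}$, which is not directly controlled by $m_i(t)$ alone and could in principle inflate $\sigma_t^2$. I would handle this by noting that $G_t$ is the greedy arm under the drifted empirical means and is therefore pulled at least as often as arm $i$, so $\frac{1}{n_{G_t}(t)+1}\leq\frac{1}{n_i(t)+1}$ and $\sigma_t^2\leq\frac{2}{n_i(t)+1}$; alternatively one can simply choose the threshold constant large enough to absorb a factor of $2$ without changing the $\log T/\underline{\Delta}^2$ order, which is what the statement of Lemma~\ref{lemma:3} requires.
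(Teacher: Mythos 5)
Your proposal follows essentially the same route as the paper's own proof: characterize compensation on arm $i$ as the event that the Gaussian sample $\theta_i(t)$ beats that of an arm whose drifted empirical mean exceeds $\bar{\mu}_i(t)$ by at least $\underline{\Delta}$, apply the Gaussian tail bound to the difference $Z=\theta_i-\theta_j$, and split the sum over $t$ at the threshold $n_i(t)\approx 2\log T/\underline{\Delta}^2$, charging the low-count phase to the pulls themselves and the high-count phase to a $1/T$ tail.

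One remark on the ``main obstacle'' you flag: you are right that the variance $\sigma_{ij}^2=\frac{1}{n_i(t)+1}+\frac{1}{n_j(t)+1}$ is not controlled by $n_i(t)$ alone, and in fact the paper's own proof silently passes from $e^{-\underline{\Delta}^2/(2\sigma_{ij}^2)}$ to $e^{-\underline{\Delta}^2(n_i(t)+1)/2}$, which is an inequality in the wrong direction since $\sigma_{ij}^2>\frac{1}{n_i(t)+1}$. However, your first proposed fix does not hold: the greedy arm $G_t$ maximizes the drifted empirical mean, not the pull count, so $n_{G_t}(t)\geq n_i(t)$ is not guaranteed (an arm can be greedy after very few lucky pulls), and without that monotonicity your fallback of ``absorbing a factor of $2$'' also fails, because $\frac{1}{n_{G_t}(t)+1}$ could be of constant order even when $n_i(t)$ is large. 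A genuine repair needs to decouple the two sample fluctuations, e.g.\ by splitting the event into $\{\theta_i(t)\geq\bar{\mu}_i(t)+\underline{\Delta}/2\}$ (controlled by $n_i(t)$ exactly as in your argument) and $\{\theta_{G_t}(t)\leq\bar{\mu}_{G_t}(t)-\underline{\Delta}/2\}$, the latter requiring a separate argument over the greedy arm's pull count. So your proposal reproduces the paper's argument and correctly diagnoses its weak step, but the patch you offer for that step is not yet valid.
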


\begin{proof}
The proof is provided  in the supplemental material.
\end{proof}

Our analysis of regret generalizes that in \cite{agrawal2017near,feng2019intrinsic} to incorporate the effect of drift caused by compensation. 
\begin{theorem} \label{theroem:3}
	For the incentivized Thompson Sampling algorithm, the expected regret $R(T)$ and compensation $C(T)$ can be bounded as follows: 
	\begin{align}
	\mathbb{E}(R(T)) &\leq \sum_{i=2}^{K}((4e^{11} + 21)P_i(T) + \frac{5}{\Delta_i^2} + Q_i(T) + \frac{\pi^2}{6}), \\
	\mathbb{E}(C(T)) &\leq  2\max(l,1)K \frac{\log T}{\underline{\Delta}^2},
	\end{align} 
	where $P_i(T) = \frac{18 \log (T \Delta_i^2)}{\Delta_i^2}$ and $Q_i(T) =  \lceil \frac{9}{2\Delta_i^2}\left((1+ \frac{4\Delta_i l}{3\underline{\Delta}^2}) \log T + \sqrt{1+ \frac{8\Delta_i l \log T}{3\underline{\Delta}^2} }\right) \rceil$.
\end{theorem}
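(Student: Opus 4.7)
The plan is to follow the template of the Thompson Sampling analysis of \citet{agrawal2017near}, with a modification at the place where concentration of the empirical mean is invoked so that the reward-drift term can be absorbed. The compensation bound is handled separately and is essentially an immediate consequence of Lemma~\ref{lemma:3}: on a compensated round the payment is $\bar{\mu}_{G_t}(t)-\bar{\mu}_{I_t}(t)\leq \max(l,1)$ (each $\bar{\mu}_i(t)$ can be taken to lie in $[0,1]$ as in the $\varepsilon$-Greedy algorithm, while the drift magnitude is controlled by the Lipschitz constant $l$ of Assumption~\ref{assumption:1}), so
\[
\mathbb{E}(C(T)) = \sum_{i=1}^{K}\mathbb{E}(C_i(T)) \leq \max(l,1)\sum_{i=1}^{K}\mathbb{E}(m_i(T)) \leq 2\max(l,1)K\,\frac{\log T}{\underline{\Delta}^2},
\]
where the last inequality is Lemma~\ref{lemma:3} summed over arms.

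For the regret I would fix the thresholds $x_i=\mu_i+\Delta_i/3$ and $y_i=\mu_1-\Delta_i/3$, and decompose
\[
\Pr(I_t=i) \leq \Pr(I_t=i,E_i^{\mu}(t),E_i^{\theta}(t)) + \Pr(I_t=i,E_i^{\mu}(t),\overline{E_i^{\theta}(t)}) + \Pr(\overline{E_i^{\mu}(t)}).
\]
The first term is bounded by the saturated-arm trick: when $E_i^{\theta}(t)$ holds and arm $i$ is chosen, we must have $\theta_1(t)\leq\theta_i(t)\leq y_i$, so the Agrawal--Goyal manipulation transfers the probability onto a sum involving $(1-p_{i,t})/p_{i,t}$ and ultimately onto plays of the optimal arm, producing the $(4e^{11}+21)P_i(T)$ contribution. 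The second term uses Gaussian anti-concentration: conditional on $E_i^{\mu}(t)$, the posterior $\theta_i(t)\sim\mathcal{N}(\bar{\mu}_i(t),1/(n_i(t)+1))$ exceeds $y_i$ with probability at most $\exp(-(y_i-x_i)^2 n_i(t)/2)$, which sums over $n_i(t)$ and $t$ to $5/\Delta_i^2$.

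The third term is where the drift actually enters the argument. We must bound
\[
\Pr(\bar{\mu}_i(t)>x_i) = \Pr\!\left(\hat{\mu}_i(t)+\tfrac{B_i(t)}{n_i(t)}>\mu_i+\tfrac{\Delta_i}{3}\right).
\]
Because $b_\tau \leq l\cdot(\bar{\mu}_{G_\tau}-\bar{\mu}_{I_\tau})\leq l$, we get the envelope $B_i(t)\leq l\cdot m_i(t)\leq 2l\log T/\underline{\Delta}^2$ from Lemma~\ref{lemma:3}. Thus the condition $\tfrac{\Delta_i n_i(t)}{3} - \tfrac{2l\log T}{\underline{\Delta}^2}\geq\sqrt{n_i(t)\log T/2}$ is sufficient for Hoeffding to deliver $\Pr(\hat{\mu}_i(t)>x_i-B_i(t)/n_i(t))\leq 1/T^2$. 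Solving this quadratic in $\sqrt{n_i(t)}$ produces precisely the threshold $Q_i(T)$: rounds with $n_i(t)\leq Q_i(T)$ are charged to $Q_i(T)$ directly, while rounds with $n_i(t)>Q_i(T)$ contribute only the $\pi^2/6$ tail. Multiplying by $\Delta_i$ and summing over $i$ yields the stated regret bound.

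The main obstacle is this third term, because it intertwines the Hoeffding concentration of $\hat{\mu}_i(t)$ with the stochastic growth of $B_i(t)$, which is itself driven by the algorithm's compensation decisions. The essential observation is that Lemma~\ref{lemma:3} is an \emph{a priori} bound on $m_i(T)$ that does not depend on the realized drift sequence, so it can serve as a deterministic-looking envelope inside the regret analysis. Once that decoupling is in place, the rest of the derivation reduces to the standard Thompson Sampling bookkeeping with $\Delta_i/3$-style thresholds and the usual combination of Chernoff--Hoeffding and Gaussian tail bounds.
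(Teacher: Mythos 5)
Your proposal is correct and follows essentially the same route as the paper: the compensation bound is Lemma~\ref{lemma:3} times a per-round payment of $\max(l,1)$, and the regret uses the identical three-way decomposition with the first two terms handed to the Agrawal--Goyal lemmas and the third term handled by feeding the drift envelope $B_i(t)\leq l\,m_i(T)\leq 2l\log T/\underline{\Delta}^2$ into Hoeffding and solving for the threshold $Q_i(T)$. The only discrepancy is cosmetic: your sufficient condition $n_iY_i^2\geq\log T/2$ (giving tail terms of size $1/T$) differs from the paper's $n_iY_i^2\geq\log T$, so it yields a marginally different constant in $Q_i(T)$ rather than ``precisely'' the stated one, but the $\pi^2/6$ tail and the overall $\mathcal{O}(\log T)$ bound are unaffected.
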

\begin{proof}
	The analysis of compensation is straightforward, similar to that for the incentivized  $\varepsilon$-Greedy algorithm. By Lemma \ref{lemma:3}, the expected compensation $\mathbb{E}(C(T)) \leq  2\max(l,1)K\frac{\log T}{\underline{\Delta}^2}$.
	
	Consider now the regret for choosing suboptimal arm $i(i\neq 1)$. We can bound $\mathbb{E}(n_i(T))$ as follows: 
	\begin{align*}
	&~~~~~\mathbb{E}(n_i(T)) \\
	&\leq \sum_{t=1}^{T} \Pr(I_t = i, E_i^\mu(t), E^{\theta}_i(t)) \\
	&~~~~+ \sum_{t=1}^{T} \Pr(I_t = i, E_i^\mu(t), \overline{E^{\theta}_i(t)}) + \sum_{t=1}^{T} \Pr(I_t = i, \overline{E^{\mu}_i(t)})
	\end{align*}
	The first two terms can be bounded by the results of \cite{agrawal2017near}, see the detail in supplemental material, since their analysis will not be affected by the reward drift. Specifically, by Lemma 4, the sum of first two terms is upper bounded by $cP_i(T) + \frac{5}{\Delta_i^2}$, where $c$ is certain constant and $P_i(T) =  \frac{18\log (T \Delta_i^2)}{\Delta_i^2}$. 
	As for the third term, the analysis is similar to that of UCB and $\varepsilon$-Greedy algorithm where the drift is bounded by $\mathcal{O}(\log T)$: 
	\small
	\begin{align*}
	&~~~~\sum_{t=1}^{T} \Pr(I_t = i, \overline{E^{\mu}_i(t)})\\
	&\leq \sum_{t=1}^{T} \Pr( \overline{E^{\mu}_i(t)}) =\sum_{t=1}^{T} \Pr( \bar{\mu}_i(t) \geq  x_i)  \\	
	& = \sum_{t=1}^{T} \Pr\left( \hat{\mu}_i(t) +  \frac{B_i(t)}{n_i(t)} \geq  x_i                 \right)	\\
	& = \sum_{t=1}^{T} \Pr\Big( \hat{\mu}_i(t) - \mu_i  \geq  \underbrace{\frac{\Delta_i}{3} - \frac{B_i(t)}{n_i(t)}}_{Y_i(t)} \Big)\\
	& \leq  \sum_{t=1}^{T} \Pr \left( \hat{\mu}_i(t) - \mu_i \geq Y_i(t), n_i(t) \leq Q_i\right) \\
	&~~~~+\sum_{t=1}^{T} \Pr \left( \hat{\mu}_i(t) - \mu_i \geq Y_i(t), n_i(t) \geq  Q_i\right) \\
	& \leq Q_i + \sum_{t=1}^{T} e^{-2n_i(t)Y_i(t)^2}  \leq Q_i + \frac{\pi^2}{6}.
	\end{align*}
	\normalsize
	where the second last inequality is due to Hoeffding's inequality.
	We then choose $Q_i$ such that, when $n_i(t) \geq Q_i$, 
	\begin{align}
	\frac{\Delta_i}{3} - \frac{B_i(t)}{n_i(t)} &\geq \frac{\Delta_i}{3} - \frac{2 l \log T }{\Delta^2 n_i(t)} \geq  0,  \label{equ:1}\\
	n_i(t) Y_i(t)^2  &\geq \log T. \label{equ:2}
	\end{align}
	
	By $\eqref{equ:1}, Q_i \geq \frac{6l \log T}{\Delta_i \underline{\Delta}^2}.$ 
	Since $n_i(t) Y_i(t)^2$ is non-increasing in $B_i(t)$, 
	equation \eqref{equ:2} requires
	\begin{align}
	\frac{\Delta_i^2}{9}n_i + \frac{4l^2 \log^2 T}{\underline{\Delta}^4} \frac{1}{n_i} \geq (1+\frac{4\Delta_i l}{3\underline{\Delta}^2}) \log T. \nonumber
	\end{align}
	The above two equations  lead to 
	$$Q_i \geq \lceil \frac{9}{2\Delta_i^2}\left((1+ \frac{4\Delta_i l}{3\underline{\Delta}^2}) \log T + \sqrt{1+ \frac{8\Delta_i l \log T}{3\underline{\Delta}^2} }\right) \rceil.$$
\end{proof}

\subsection{Discussion of Results}
As can be seen from the above analysis, all three instantiations of the incentivized exploration algorithm attain $\mathcal{O}(\log T)$ regret and compensation upper bound under drifted reward. Our results match both the theoretical lower bound for regret in \cite{lai1985asymptotically}  and lower bound for compensation in \cite{Wang2018MultiarmedBW}  without reward drift.  Although explicit lower bounds of the regret and compensation with drifted feedback in our setting remain unknown, we argue that these lower bounds should be larger or equal to the lower bound without reward drift since non-drifting environment is a special case of the drifted reward feedback with drift function $f_t = 0$. On the other hand, the proposed incentive mechanism is still cost-efficient even the payment will lead to biased feedback, as the principal can reduce the regret from $\mathcal{O}(T)$ for the players' myopic choices to $\mathcal{O}(\log T)$ by paying merely  $\mathcal{O}(\log T)$ in incentive. 




In terms of sensitivity to unknown drift functions $f_t$, both incentivized $\varepsilon$-Greedy and Thompson Sampling attain $\mathcal{O}(l)$ regret and compensation, while the incentivized UCB attains $\mathcal{O}((l+1)^2)$ regret and $\mathcal{O}(l+1)$ compensation. This difference comes from two aspects: 1) UCB is deterministic given the history while $\varepsilon$-Greedy and Thompson Sampling have a randomized exploration phase which makes them less sensitive to the drift. 2) For UCB, the drift effect is bounded by the amount of compensation which affects the frequency of compensation and in turn shapes the amount of compensation, while for $\varepsilon$-Greedy and Thompson Sampling, the cumulative drift can be directly bounded by the frequency of compensation. 
Numerical experiments reported in the next section are consistent with these analytical results.
\section{Numerical Examples}


In this section, we carry out numerical experiments using synthetic data to complement the previous analysis of the
incentivized MAB algorithms under reward drift, including UCB, $\varepsilon$-Greedy and Thompson Sampling.

We generate a pool of $K=9$ arms with mean reward vector $\bm{\mu} = [0.9,0.8,0.7,0.6,0.5,0.4,0.3,0.2,0.1]$. In each iteration, after the player pulls an arm $I_t$, reward $r_t$ is set to the arm’s mean reward plus a random term drawn from $\mathcal{N}(0, 1)$, i.e. $r_t = \bm{\mu}_{I_t}+\mathcal{N}(0, 1)$. Because of the randomness in sample rewards, the greedy algorithm without exploration suffers a linear regret, e.g., we observe nearly 6000 regret for 20000 trials. For the reward drift under compensation, we consider a linear drifting function  $b_t = l x_t$ where $x_t$ is the compensation offered by the principle and coefficient $l \geq 0$.  The player reveals drifted reward feedback $r_t+b_t$. 

\begin{figure}[ht!]
	\vspace{-3mm}
	\centering
	\includegraphics[width=1\columnwidth]{./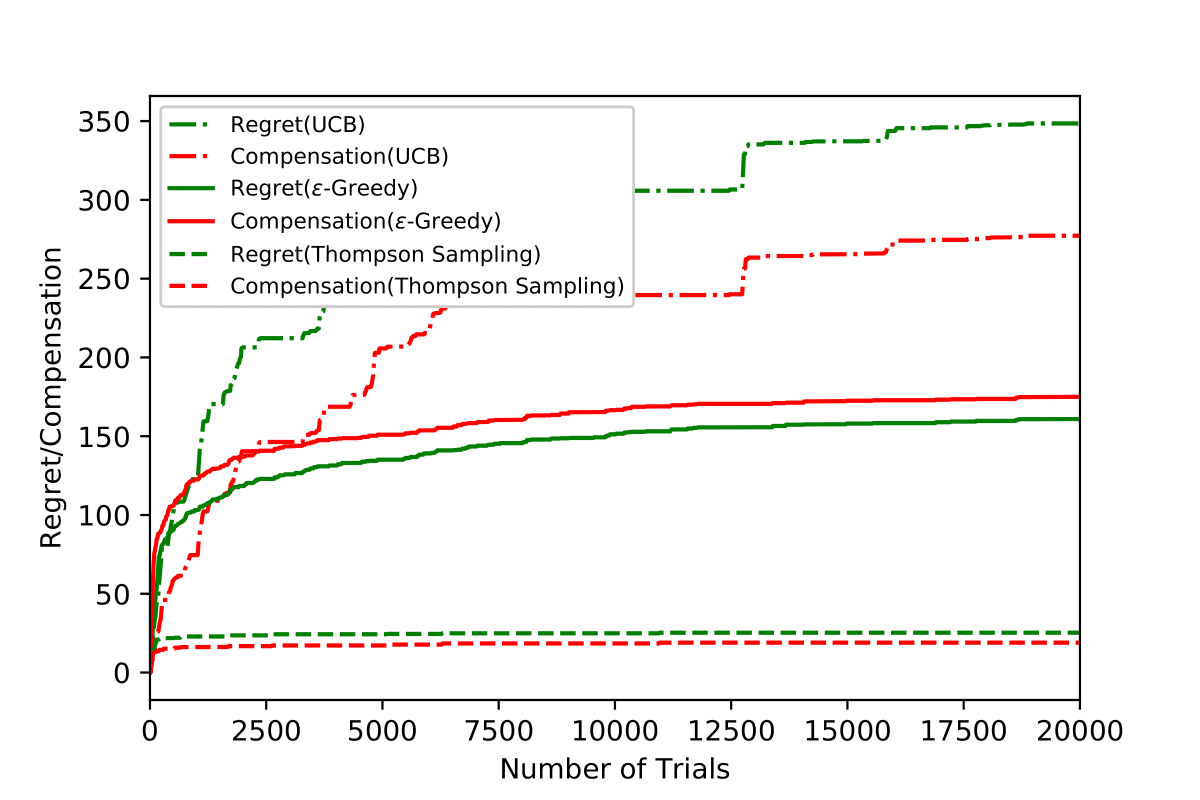}
	\caption{Regret and Compensation for UCB, $\varepsilon$-Greedy and Thompson Sampling without reward drift.
	}\label{fig:2} 
	\vspace{-3mm}
\end{figure}


\begin{figure}[ht!]
	\centering
	\includegraphics[width=1\columnwidth]{./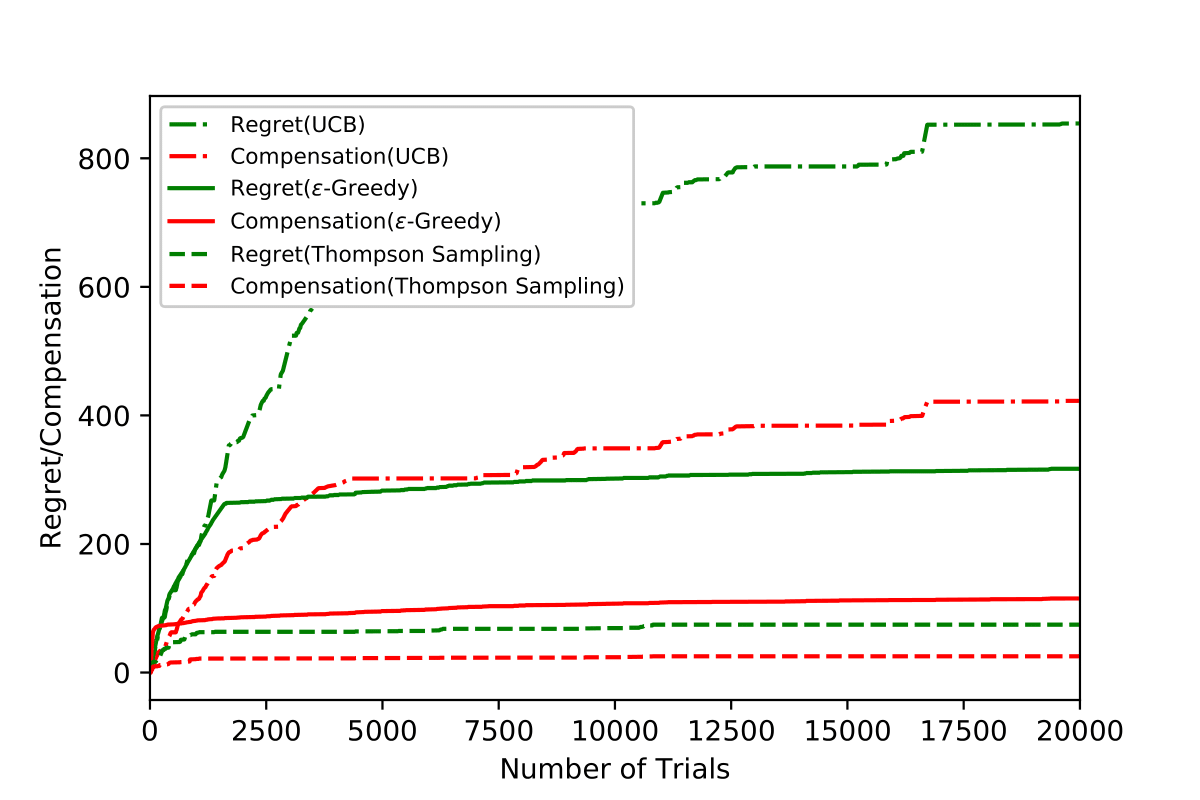}
	\caption{Regret and Compensation for UCB, $\varepsilon$-Greedy and Thompson Sampling with drift coefficient $l = 1.1$.
	}\label{fig:3} 
	\vspace{-3mm}
\end{figure}
For the incentivized exploration, we first compare regret and compensation in a non-drifting environment ($l=0$) and a drifted reward environment ($l>0$). In a non-drifting reward environment the player always gives unbiased feedback even offered with compensation. The result is shown in Fig. \ref{fig:2}. As expected, all three instantiations of the incentivized MAB algorithms have a sub-linear regret and compensation. Thompson Sampling outperforms the other two both in the regret (which is consistent with observation from previous work \cite{vermorel2005multi,chapelle2011empirical}) and compensation.

In Fig. \ref{fig:3} we show the performance of the incentivzed MAB algorithms under drifted reward with drift coefficient $l=1.1$.  We first observe that over the three algorithms Thompson Sampling still performs the best. While their relative performance are in same order as Fig. \ref{fig:2}, the regret and compensation are worse than non-drifting setting, e.g., regret of UCB increases from 350 to 800 because of the biased feedback.  

To better understand the effect of drifted reward, we vary the coefficient $l$ from $0$ to $1.1$ and present the results in Table \ref{table:1}.  We notice that the incentivized UCB incurs largest regret and compensation. This is due to the fact that, as the time goes, a larger UCB and uncertainty are assigned to those arms that are less explored but may in fact have small mean rewards, and the resulting higher chance of those suboptimal arms being selected leads to larger regret and compensation. We also notice that the gap between regret and compensation of UCB increase faster compared to the other two. This is consistent with out theoretical analysis that  the regret of UCB is in the order of $\mathcal{O}((l+1)^2)$ and compensation is in the order  $\mathcal{O}{(l+1)}$.



\begin{table} \small 
	\begin{center}
		\begin{tabular}{|@{\hspace{0.5\tabcolsep}} c @{\hspace{0.5\tabcolsep}} @{\hspace{0.5\tabcolsep}} c @{\hspace{0.5\tabcolsep}}@{\hspace{0.5\tabcolsep}}c@{\hspace{0.5\tabcolsep}} @{\hspace{0.5\tabcolsep}}c@{\hspace{0.5\tabcolsep}}@{\hspace{0.5\tabcolsep}}c@{\hspace{0.5\tabcolsep}}@{\hspace{0.5\tabcolsep}}c@{\hspace{0.5\tabcolsep}}@{\hspace{0.5\tabcolsep}}c@{\hspace{0.5\tabcolsep}}@{\hspace{0.5\tabcolsep}}c@{\hspace{0.5\tabcolsep}}|} 
			\hline  
			$l$ & 0 & 0.05 & 0.1& 0.4&0.7&0.9& 1.1 \\  
			\hline
			UCB(R) &  348.5&  432.1&451.9 &522.8&615.1&712.9&854.2 \\ 
			UCB(C) &277.2  & 292.9 &349.5 &375.6&408.0&473.0&422.7  \\
			\hline 
			$\varepsilon$-Greedy(R) & 160.0 & 170.3 &218.0 &260.1&266.2&272.6&317.0  \\
			$\varepsilon$-Greedy(C) & 185.9 & 217.4 &130.4 &167.6&102.8&161.8& 115.2 \\
			\hline
			TS(R) & 25.3 & 28.2 &33.4 &37.1&46.3&63.6&74.5 \\
			TS(C) & 18.9  &23.7  &20.9 &29.3&22.9&29.1&25.3  \\
			\hline
		\end{tabular}
		\caption{Regret (R) and Compensation (C) with different drift coefficients.} \label{table:1}	
	\end{center}
\end{table}

\begin{table} \small
	\begin{center}
		\begin{tabular}{|@{\hspace{0.5\tabcolsep}} c @{\hspace{0.5\tabcolsep}} @{\hspace{0.5\tabcolsep}} c @{\hspace{0.5\tabcolsep}}@{\hspace{0.5\tabcolsep}}c@{\hspace{0.5\tabcolsep}} @{\hspace{0.5\tabcolsep}}c@{\hspace{0.5\tabcolsep}}@{\hspace{0.5\tabcolsep}}c@{\hspace{0.5\tabcolsep}}@{\hspace{0.5\tabcolsep}}c@{\hspace{0.5\tabcolsep}}@{\hspace{0.5\tabcolsep}}c@{\hspace{0.5\tabcolsep}}@{\hspace{0.5\tabcolsep}}c@{\hspace{0.5\tabcolsep}}|} 
			\hline  
			$l$ & 0 & 0.05 & 0.1& 0.4&0.7&0.9& 1.1 \\  
			\hline
			UCB(N) &  1225&  1639&1954 &2172&2288&2912&3374 \\ 
			UCB(E) &0.4\%  & 0.9\% &0.5\% &1.2\%&1.9\%&0.4\%&3.1\%  \\
			\hline 
			$\varepsilon$-Greedy(N) & 273 & 329 &304 &303&276&293&308  \\
			$\varepsilon$-Greedy(E) & 0.7\% & 1.5\% &0.5\% &1.0\%&1.6\%&0.4\%& 0.8\% \\
			\hline
			TS(N) & 60 & 79 &58 &98&131&109&106 \\
			TS(E) & 0.7\%  &0.7\%  &1.6\% &2.0\%&0.1\%&1.7\%&0.7\%  \\
			\hline
		\end{tabular}
		\caption{Number of compensation (N) and relative error (E) of estimation of arm $1$ with different drift coefficients.} \label{table:2}	
	\end{center}
	\vspace{-3mm}
\end{table}

We then exam the frequency of compensation, as well as the estimation error for arm 1 in terms of the relative error of the average drifted reward compared to the mean reward, and present the result in Table \ref{table:2}. We see that all three incentivized exploration algorithms achieve small estimation errors that are not sensitive to the drift coefficient $l$. This is expected, as the the expected compensation and thus the drift per time approaches 0 as T increases. However, while the incentivized $\varepsilon$-Greedy and Thompson Sampling have roughly a constant frequency of compensation across different $l$ values, the incentivized UCB is more sensitive to the coefficient in the frequency of compensation. The constant frequency of compensation for $\varepsilon$-Greedy and Thompson Sampling can be seen from the proof of Theorem \ref{theroem:2} and Lemma \ref{lemma:3} that show the frequency does not depend on the drift. In contrast, seen from the proof of Theorem \ref{theroem:1}, the frequency of compensation for UCB depends on the drift through equation (\ref{eq:egrb}). 


	\section{Conclusion}
	We propose and study multi-armed bandit algorithm with incentivized exploration under reward drift, where the player provides a biased reward feedback that is the sum of the true reward and a drift term that is non-decreasing in compensation. We analyze the regret and compensation for three instantiations of the incentivized MAB algorithm where the principal employs UCB, $\epsilon$-Greedy and Thompson Sampling, respectively.  Our results show that the algorithms achieve $\mathcal{O}(\log T)$ regret and compensation, and are therefor effective in incentivizing exploration. Our current analysis is based on the assumption that the reward drift is non-decreasing over the compensation. In the future work, we would like to study other assumptions about drift function and their corresponding impact on regret and compensation. It is also important to explore if an algorithm can leverage the drifted reward to reduce the compensation.

	\bibliography{reference}

\begin{thebibliography}{}

\bibitem[\protect\citeauthoryear{Abramowitz}{1965}]{abramowitz1965handbook}
Abramowitz, M.
\newblock 1965.
\newblock Handbook of mathematical functions with formulas.
\newblock {\em Graphs, and Mathematical Tables}.

\bibitem[\protect\citeauthoryear{Agarwal \bgroup et al\mbox.\egroup
  }{2014}]{agarwal2014taming}
Agarwal, A.; Hsu, D.; Kale, S.; Langford, J.; Li, L.; and Schapire, R.
\newblock 2014.
\newblock Taming the monster: A fast and simple algorithm for contextual
  bandits.
\newblock In {\em International Conference on Machine Learning},  1638--1646.

\bibitem[\protect\citeauthoryear{Agrawal and Goyal}{2013}]{agrawal2013further}
Agrawal, S., and Goyal, N.
\newblock 2013.
\newblock Further optimal regret bounds for thompson sampling.
\newblock In {\em Artificial intelligence and statistics},  99--107.

\bibitem[\protect\citeauthoryear{Agrawal and Goyal}{2017}]{agrawal2017near}
Agrawal, S., and Goyal, N.
\newblock 2017.
\newblock Near-optimal regret bounds for thompson sampling.
\newblock {\em Journal of the ACM (JACM)} 64(5):30.

\bibitem[\protect\citeauthoryear{Auer, Cesa-Bianchi, and
  Fischer}{2002}]{auer2002finite}
Auer, P.; Cesa-Bianchi, N.; and Fischer, P.
\newblock 2002.
\newblock Finite-time analysis of the multiarmed bandit problem.
\newblock {\em Machine learning} 47(2-3):235--256.

\bibitem[\protect\citeauthoryear{Bouneffouf, Bouzeghoub, and
  Gan{\c{c}}arski}{2012}]{bouneffouf2012contextual}
Bouneffouf, D.; Bouzeghoub, A.; and Gan{\c{c}}arski, A.~L.
\newblock 2012.
\newblock A contextual-bandit algorithm for mobile context-aware recommender
  system.
\newblock In {\em International Conference on Neural Information Processing},
  324--331.
\newblock Springer.

\bibitem[\protect\citeauthoryear{Bubeck and
  Cesa-Bianchi}{2012}]{bubeck2012regret}
Bubeck, S., and Cesa-Bianchi, N.
\newblock 2012.
\newblock Regret analysis of stochastic and nonstochastic multi-armed bandit
  problems.
\newblock {\em Foundations and Trends{\textregistered} in Machine Learning}
  5(1):1--122.

\bibitem[\protect\citeauthoryear{Chapelle and Li}{2011}]{chapelle2011empirical}
Chapelle, O., and Li, L.
\newblock 2011.
\newblock An empirical evaluation of thompson sampling.
\newblock In {\em Advances in neural information processing systems},
  2249--2257.

\bibitem[\protect\citeauthoryear{Ehsani and Ehsani}{2015}]{ehsani2015effect}
Ehsani, Z., and Ehsani, M.~H.
\newblock 2015.
\newblock Effect of quality and price on customer satisfaction and commitment
  in iran auto industry.
\newblock {\em International Journal of Service Science, Management and
  Engineering} 1(5):52.

\bibitem[\protect\citeauthoryear{Feng, Parkes, and
  Xu}{2019}]{feng2019intrinsic}
Feng, Z.; Parkes, D.~C.; and Xu, H.
\newblock 2019.
\newblock The intrinsic robustness of stochastic bandits to strategic
  manipulation.
\newblock {\em arXiv preprint arXiv:1906.01528}.

\bibitem[\protect\citeauthoryear{Frazier \bgroup et al\mbox.\egroup
  }{2014}]{frazier2014incentivizing}
Frazier, P.; Kempe, D.; Kleinberg, J.; and Kleinberg, R.
\newblock 2014.
\newblock Incentivizing exploration.
\newblock In {\em Proceedings of the fifteenth ACM conference on Economics and
  computation},  5--22.
\newblock ACM.

\bibitem[\protect\citeauthoryear{Gai, Krishnamachari, and
  Jain}{2010}]{gai2010learning}
Gai, Y.; Krishnamachari, B.; and Jain, R.
\newblock 2010.
\newblock Learning multiuser channel allocations in cognitive radio networks: A
  combinatorial multi-armed bandit formulation.
\newblock In {\em 2010 IEEE Symposium on New Frontiers in Dynamic Spectrum
  (DySPAN)},  1--9.
\newblock IEEE.

\bibitem[\protect\citeauthoryear{Han, Kempe, and
  Qiang}{2015}]{han2015incentivizing}
Han, L.; Kempe, D.; and Qiang, R.
\newblock 2015.
\newblock Incentivizing exploration with heterogeneous value of money.
\newblock In {\em International Conference on Web and Internet Economics},
  370--383.
\newblock Springer.

\bibitem[\protect\citeauthoryear{Hirnschall \bgroup et al\mbox.\egroup
  }{2018}]{hirnschall2018learning}
Hirnschall, C.; Singla, A.; Tschiatschek, S.; and Krause, A.
\newblock 2018.
\newblock Learning user preferences to incentivize exploration in the sharing
  economy.
\newblock In {\em Thirty-Second AAAI Conference on Artificial Intelligence}.

\bibitem[\protect\citeauthoryear{Hoeffding}{1994}]{hoeffding1994probability}
Hoeffding, W.
\newblock 1994.
\newblock Probability inequalities for sums of bounded random variables.
\newblock In {\em The Collected Works of Wassily Hoeffding}. Springer.
\newblock  409--426.

\bibitem[\protect\citeauthoryear{Immorlica \bgroup et al\mbox.\egroup
  }{2018}]{immorlica2018incentivizing}
Immorlica, N.; Mao, J.; Slivkins, A.; and Wu, Z.~S.
\newblock 2018.
\newblock Incentivizing exploration with unbiased histories.
\newblock {\em arXiv preprint arXiv:1811.06026}.

\bibitem[\protect\citeauthoryear{Immorlica \bgroup et al\mbox.\egroup
  }{2019}]{immorlica2019bayesian}
Immorlica, N.; Mao, J.; Slivkins, A.; and Wu, Z.~S.
\newblock 2019.
\newblock Bayesian exploration with heterogeneous agents.
\newblock In {\em The World Wide Web Conference},  751--761.
\newblock ACM.

\bibitem[\protect\citeauthoryear{Jun \bgroup et al\mbox.\egroup
  }{2018}]{jun2018adversarial}
Jun, K.-S.; Li, L.; Ma, Y.; and Zhu, J.
\newblock 2018.
\newblock Adversarial attacks on stochastic bandits.
\newblock In {\em Advances in Neural Information Processing Systems},
  3640--3649.

\bibitem[\protect\citeauthoryear{Lai and Robbins}{1985}]{lai1985asymptotically}
Lai, T.~L., and Robbins, H.
\newblock 1985.
\newblock Asymptotically efficient adaptive allocation rules.
\newblock {\em Advances in applied mathematics} 6(1):4--22.

\bibitem[\protect\citeauthoryear{Lee and Lin}{2005}]{lee2005customer}
Lee, G.-G., and Lin, H.-F.
\newblock 2005.
\newblock Customer perceptions of e-service quality in online shopping.
\newblock {\em International Journal of Retail \& Distribution Management}
  33(2):161--176.

\bibitem[\protect\citeauthoryear{Li \bgroup et al\mbox.\egroup
  }{2010}]{li2010contextual}
Li, L.; Chu, W.; Langford, J.; and Schapire, R.~E.
\newblock 2010.
\newblock A contextual-bandit approach to personalized news article
  recommendation.
\newblock In {\em Proceedings of the 19th international conference on World
  wide web},  661--670.
\newblock ACM.

\bibitem[\protect\citeauthoryear{Liu and Ho}{2018}]{liu2018incentivizing}
Liu, Y., and Ho, C.-J.
\newblock 2018.
\newblock Incentivizing high quality user contributions: new arm generation in
  bandit learning.
\newblock In {\em Thirty-Second AAAI Conference on Artificial Intelligence}.

\bibitem[\protect\citeauthoryear{Lykouris, Mirrokni, and
  Paes~Leme}{2018}]{lykouris2018stochastic}
Lykouris, T.; Mirrokni, V.; and Paes~Leme, R.
\newblock 2018.
\newblock Stochastic bandits robust to adversarial corruptions.
\newblock In {\em Proceedings of the 50th Annual ACM SIGACT Symposium on Theory
  of Computing},  114--122.
\newblock ACM.

\bibitem[\protect\citeauthoryear{Mansour, Slivkins, and
  Syrgkanis}{2015}]{mansour2015bayesian}
Mansour, Y.; Slivkins, A.; and Syrgkanis, V.
\newblock 2015.
\newblock Bayesian incentive-compatible bandit exploration.
\newblock In {\em Proceedings of the Sixteenth ACM Conference on Economics and
  Computation},  565--582.
\newblock ACM.

\bibitem[\protect\citeauthoryear{Martensen, Gronholdt, and
  Kristensen}{2000}]{martensen2000drivers}
Martensen, A.; Gronholdt, L.; and Kristensen, K.
\newblock 2000.
\newblock The drivers of customer satisfaction and loyalty: cross-industry
  findings from denmark.
\newblock {\em Total Quality Management} 11(4-6):544--553.

\bibitem[\protect\citeauthoryear{Radlinski, Kleinberg, and
  Joachims}{2008}]{radlinski2008learning}
Radlinski, F.; Kleinberg, R.; and Joachims, T.
\newblock 2008.
\newblock Learning diverse rankings with multi-armed bandits.
\newblock In {\em Proceedings of the 25th international conference on Machine
  learning},  784--791.
\newblock ACM.

\bibitem[\protect\citeauthoryear{Razak, Nirwanto, and
  Triatmanto}{2016}]{razak2016impact}
Razak, I.; Nirwanto, N.; and Triatmanto, B.
\newblock 2016.
\newblock The impact of product quality and price on customer satisfaction with
  the mediator of customer value.
\newblock {\em Journal of Marketing and Consumer Research} 30(1):59--68.

\bibitem[\protect\citeauthoryear{Sutton and
  Barto}{2018}]{sutton2018reinforcement}
Sutton, R.~S., and Barto, A.~G.
\newblock 2018.
\newblock {\em Reinforcement learning: An introduction}.
\newblock MIT press.

\bibitem[\protect\citeauthoryear{Vermorel and Mohri}{2005}]{vermorel2005multi}
Vermorel, J., and Mohri, M.
\newblock 2005.
\newblock Multi-armed bandit algorithms and empirical evaluation.
\newblock In {\em European conference on machine learning},  437--448.
\newblock Springer.

\bibitem[\protect\citeauthoryear{Wang and Huang}{2018}]{Wang2018MultiarmedBW}
Wang, S., and Huang, L.
\newblock 2018.
\newblock Multi-armed bandits with compensation.
\newblock In {\em NeurIPS}.

\bibitem[\protect\citeauthoryear{Yue and Joachims}{2009}]{yue2009interactively}
Yue, Y., and Joachims, T.
\newblock 2009.
\newblock Interactively optimizing information retrieval systems as a dueling
  bandits problem.
\newblock In {\em Proceedings of the 26th Annual International Conference on
  Machine Learning},  1201--1208.
\newblock ACM.

\end{thebibliography}
	\bibliographystyle{aaai}

		\section{Supplementary Material}
\begin{fact}{(Hoeffding's inequality \cite{hoeffding1994probability})} \label{fact:1}
	Assume that $X_1,\cdots, X_n$ are i.i.d drawn from any distribution with mean $\mu$ and support $[0,1]$. Let $\bar{X} = \frac{1}{n} \sum_{i=1}^{n} X_i$, then for any $\delta \geq  0$, 
	\begin{align*}
	\Pr(\bar{X} - \mu \geq \delta) \leq  e^{-2n\delta^2}.
	\end{align*}
\end{fact}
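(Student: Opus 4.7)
The plan is to prove the inequality via the standard Chernoff-bound route: exponentiate, apply Markov, factor the moment generating function by independence, bound each factor using Hoeffding's lemma for bounded random variables, and optimize the free parameter. Concretely, for any $s>0$ I would start from
\begin{align*}
\Pr(\bar{X}-\mu \geq \delta) = \Pr\!\left(e^{s(\bar{X}-\mu)} \geq e^{s\delta}\right) \leq e^{-s\delta}\,\mathbb{E}\!\left[e^{s(\bar{X}-\mu)}\right],
\end{align*}
and then use the i.i.d.\ assumption to factor $\mathbb{E}[e^{s(\bar{X}-\mu)}]=\prod_{i=1}^n \mathbb{E}[e^{(s/n)(X_i-\mu)}]$.

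Next I would invoke Hoeffding's lemma: for a centered random variable $Y$ supported on an interval $[a,b]$, $\mathbb{E}[e^{tY}]\leq \exp\!\bigl(t^2(b-a)^2/8\bigr)$. Since each $X_i$ lies in $[0,1]$, the centered variable $X_i-\mu$ lies in $[-\mu,1-\mu]$, an interval of length $1$, so each factor is at most $\exp(s^2/(8n^2))$. Combining,
\begin{align*}
\Pr(\bar{X}-\mu \geq \delta) \leq \exp\!\left(-s\delta + \frac{s^2}{8n}\right).
\end{align*}
Finally I would minimize the exponent over $s>0$: setting the derivative to zero gives $s=4n\delta$, and substitution yields the claimed bound $e^{-2n\delta^2}$.

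The one nontrivial ingredient is Hoeffding's lemma. I would prove it by noting that for $Y\in[a,b]$ with $\mathbb{E}[Y]=0$, convexity of $x\mapsto e^{tx}$ gives $e^{tY}\leq \tfrac{b-Y}{b-a}e^{ta}+\tfrac{Y-a}{b-a}e^{tb}$; taking expectations eliminates the linear term and leaves $\mathbb{E}[e^{tY}]\leq \tfrac{b}{b-a}e^{ta}-\tfrac{a}{b-a}e^{tb}=e^{\varphi(u)}$, where $u=t(b-a)$ and $\varphi(u)=\log(\tfrac{b}{b-a}e^{ua/(b-a)}-\tfrac{a}{b-a}e^{ub/(b-a)+ua/(b-a)-ua/(b-a)})$ simplifies after differentiation to a function with $\varphi(0)=\varphi'(0)=0$ and $\varphi''(u)\leq 1/4$, giving $\varphi(u)\leq u^2/8$ by Taylor's theorem. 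I expect this convexity-plus-Taylor argument to be the main obstacle in terms of writing care, since everything else (Markov, independence, optimization of $s$) is routine; all the algebraic constants (the $8$ in the lemma becoming the $2$ in the final bound) flow deterministically once the lemma is in hand.
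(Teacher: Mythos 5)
The paper offers no proof of this statement at all: it is imported as a classical result (a \emph{Fact}) with a citation to Hoeffding, so there is no internal argument to compare against. Your derivation is the standard and correct one --- Chernoff exponentiation plus Markov, factorization of the moment generating function by independence, Hoeffding's lemma with interval length $b-a=1$ giving the factor $\exp(s^2/(8n^2))$ per term, and optimization at $s=4n\delta$ yielding the exponent $-2n\delta^2$. The constants all check out. The only blemish is cosmetic: the displayed expression for $\varphi(u)$ in your sketch of Hoeffding's lemma is garbled (the exponentials inside the logarithm are not written consistently), but the underlying argument --- convexity of $x\mapsto e^{tx}$ to bound $e^{tY}$ by a chord, taking expectations to kill the linear term, then $\varphi(0)=\varphi'(0)=0$ and $\varphi''\leq 1/4$ followed by Taylor's theorem --- is the standard correct route, and it implicitly uses $a\leq 0\leq b$, which holds here because $Y=X_i-\mu$ is centered. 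If you write that one display more carefully, the proof is complete.
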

\begin{fact}{\cite{abramowitz1965handbook}} \label{fact:2}
	For a Gaussian distributed random variable $X$ with mean $\mu$ and variance $\sigma^2$, for any x,
	\begin{equation*}
	\Pr(|X-\mu| \geq x\sigma) \leq \frac{1}{2}e^{-x^2/2}.
	\end{equation*} 
\end{fact}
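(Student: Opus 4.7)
The plan is to reduce the statement to a tail bound on the standard normal by standardization, and then establish that tail bound by a short monotonicity argument on an auxiliary function.

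First, I would set $Z = (X-\mu)/\sigma$. Since $X$ is Gaussian with mean $\mu$ and variance $\sigma^{2}$, $Z$ is standard normal, and $\Pr(|X-\mu| \geq x\sigma) = \Pr(|Z| \geq x)$. Reading the factor $\tfrac{1}{2}$ on the right-hand side as indicating the one-sided form that is the actual content of the cited Abramowitz--Stegun estimate, by symmetry of the standard normal density it suffices to show
\begin{equation*}
Q(x) \;:=\; \Pr(Z \geq x) \;\leq\; \tfrac{1}{2}\,e^{-x^{2}/2} \qquad \text{for all } x \geq 0,
\end{equation*}
since the case $x \leq 0$ is either vacuous (the right-hand side is at least $\tfrac{1}{2}$) or handled by symmetry, and the two-sided version follows by doubling.

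Second, I would prove the one-sided bound through an elementary differential inequality. Define
\begin{equation*}
g(x) \;=\; \tfrac{1}{2}\,e^{-x^{2}/2} \;-\; Q(x), \qquad x \geq 0,
\end{equation*}
and check the boundary values $g(0) = \tfrac{1}{2} - \tfrac{1}{2} = 0$ and $\lim_{x \to \infty} g(x) = 0$ (both terms decay, and $Q$ decays faster by the Mills-ratio estimate $Q(x) \leq \phi(x)/x$). Differentiating,
\begin{equation*}
g'(x) \;=\; -\tfrac{x}{2}\,e^{-x^{2}/2} \;+\; \tfrac{1}{\sqrt{2\pi}}\,e^{-x^{2}/2} \;=\; e^{-x^{2}/2}\Bigl(\tfrac{1}{\sqrt{2\pi}} - \tfrac{x}{2}\Bigr),
\end{equation*}
which is strictly positive for $x < \sqrt{2/\pi}$ and strictly negative for $x > \sqrt{2/\pi}$. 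Hence $g$ rises from $0$, attains a unique maximum at $x = \sqrt{2/\pi}$, and then decreases monotonically back to $0$, which forces $g(x) \geq 0$ throughout $[0,\infty)$, i.e., $Q(x) \leq \tfrac{1}{2}\,e^{-x^{2}/2}$.

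The main (really the only) obstacle is confirming that $g$ cannot dip below zero despite vanishing at both ends of $[0,\infty)$; unimodality of $g$ (exactly one sign change in $g'$) combined with the two boundary values rules that out. The rest is mechanical: standardize, invoke symmetry, and differentiate. An alternative route via Chernoff ($\Pr(Z \geq x) \leq \inf_{\lambda>0} e^{-\lambda x + \lambda^{2}/2} = e^{-x^{2}/2}$) gives a looser constant than $\tfrac{1}{2}$, which is why I would prefer the direct monotonicity proof above to recover the exact stated form.
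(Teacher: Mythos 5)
The paper offers no proof of this fact---it is simply cited from Abramowitz and Stegun---so there is nothing internal to compare against; your job here was really to supply a proof from scratch, and your argument is correct for the version of the inequality that the paper actually uses. Your standardization, the computation $g'(x) = e^{-x^2/2}\bigl(\tfrac{1}{\sqrt{2\pi}} - \tfrac{x}{2}\bigr)$, and the unimodality argument (single sign change of $g'$ plus $g(0)=0$ and $g(x)\to 0$) correctly establish the one-sided bound $\Pr(Z \geq x) \leq \tfrac{1}{2}e^{-x^2/2}$ for $x \geq 0$. You are also right to flag that the two-sided statement as literally printed is false with the constant $\tfrac12$: at $x=0$ it asserts $1 \leq \tfrac12$, and at $x=1$ it asserts $0.317 \leq 0.303$. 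Doubling your one-sided bound gives only $e^{-x^2/2}$, not $\tfrac12 e^{-x^2/2}$, so the "follows by doubling" remark does not rescue the literal statement---but your reading of the factor $\tfrac12$ as signalling the one-sided form is the right one, and it is confirmed by how the fact is deployed in the paper's proof of Lemma~\ref{lemma:3}, where the bound is applied to the one-sided event $Z - \bar{\mu}_{ij} \geq -\bar{\mu}_{ij}$ with $-\bar{\mu}_{ij} \geq 0$. The only cosmetic caveat is that the restriction to $x \geq 0$ is essential (for $x<0$ even the one-sided bound fails, since $Q(x) > \tfrac12 \geq \tfrac12 e^{-x^2/2}$), so the statement's "for any $x$" should be read as "for any $x \geq 0$."
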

\begin{lemma}\label{lemma:1}{(Lemma 4 in \cite{jun2018adversarial} and Lemma 9 in \cite{agarwal2014taming})}
	Let $\delta \leq 1/2$ and suppose $t$ satisfy $\sum_{\tau=1}^{t}\varepsilon_\tau \geq \frac{K}{e-2} \log (K/\delta).$ With probability at least $1-\delta$, the number of exploration $m_i(t)$ of arm $i$ up to time $t$ is bounded as follows: 
	\begin{align}
	m_i(t) \leq \sum_{\tau = 1}^{t} \frac{\varepsilon_\tau}{K} + \sqrt{3\sum_{\tau=1}^{t} \frac{\varepsilon_\tau}{K} \log \frac{K}{\delta}}.	
	\end{align}
\end{lemma}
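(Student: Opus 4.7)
The plan is to recognize $m_i(t)$ as a sum of independent Bernoulli random variables, apply a one-sided Bernstein-type tail bound, and then invoke the lower-bound hypothesis on $\sum_{\tau=1}^t \varepsilon_\tau$ to certify that the resulting tail probability is at most $\delta$.

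First, I would fix the arm $i$ and, for each $\tau \leq t$, let $X_\tau \in \{0,1\}$ indicate the joint event that at time $\tau$ the principal chooses to explore (which happens with probability $\varepsilon_\tau$) and the uniform draw over $[K]$ selects arm $i$. Because the schedule $\varepsilon_\tau = \min(1, cK/\tau)$ in Algorithm~3 is deterministic and the uniform draw is independent of the history, the variables $X_\tau$ are independent Bernoullis with $\Pr(X_\tau = 1) = \varepsilon_\tau / K$. Consequently $m_i(t) = \sum_{\tau=1}^t X_\tau$ and $\mu_t := \mathbb{E}[m_i(t)] = \sum_{\tau=1}^t \varepsilon_\tau / K$, which reproduces the leading term on the right-hand side of the claimed inequality.

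Second, I would apply the one-sided Bernstein inequality for bounded independent variables,
\begin{equation*}
\Pr(m_i(t) - \mu_t \geq a) \leq \exp\left(-\frac{a^2}{2\mu_t + 2a/3}\right), \qquad a \geq 0,
\end{equation*}
which exploits $\mathrm{Var}(X_\tau) = p_\tau(1-p_\tau) \leq p_\tau$ so the variance proxy is exactly $\mu_t$. Substituting the target deviation $a = \sqrt{3\mu_t \log(K/\delta)}$ and simplifying, the exponent is at most $-\log(K/\delta)$ precisely when $\mu_t \geq \frac{4}{3}\log(K/\delta)$. The hypothesis $\sum_\tau \varepsilon_\tau \geq \frac{K}{e-2}\log(K/\delta)$ translates to $\mu_t \geq \log(K/\delta)/(e-2)$, and since $1/(e-2) > 4/3$ this is enough to obtain $\Pr\!\left(m_i(t) \geq \mu_t + \sqrt{3\mu_t \log(K/\delta)}\right) \leq \delta$, which is the claimed statement.

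The main obstacle is bookkeeping constants. Bernstein already yields the correct $\sqrt{3\mu_t \log(K/\delta)}$ deviation, but the precise factor $1/(e-2)$ in the hypothesis is the telltale signature of a slightly sharper Chernoff variant arising from $e^\lambda - 1 - \lambda \leq (e-2)\lambda^2$ for $\lambda \in [0,1]$, plugged into the Cram\'er--Chernoff derivation; using that variant in place of generic Bernstein closes the small $(e-2)$-versus-$4/3$ gap exactly. Everything else---verifying independence, computing the mean, and collecting terms---is routine, and a uniform bound over all $K$ arms follows by an implicit union bound already absorbed in the $\log(K/\delta)$ factor.
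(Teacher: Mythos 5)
The paper itself does not prove this lemma: it is imported verbatim by citation (Lemma 4 of Jun et al.\ and Lemma 9 of Agarwal et al.), so there is no in-paper argument to compare against. Your self-contained proof is correct. The decomposition $m_i(t)=\sum_\tau X_\tau$ with independent Bernoulli$(\varepsilon_\tau/K)$ indicators is legitimate here because the exploration coin and the uniform arm draw in Algorithm~3 are exogenous to the history and $\varepsilon_\tau$ is a deterministic schedule; one-sided Bernstein with variance proxy $\sigma^2\le\mu_t$ then gives, for $a=\sqrt{3\mu_t\log(K/\delta)}$, a tail of $\exp(-\log(K/\delta))=\delta/K$ whenever $\mu_t\ge\tfrac{4}{3}\log(K/\delta)$, and the hypothesis supplies $\mu_t\ge\tfrac{1}{e-2}\log(K/\delta)$ with $\tfrac{1}{e-2}\approx 1.392>\tfrac{4}{3}$, so the condition is met; a union bound over the $K$ arms converts $\delta/K$ into the stated $1-\delta$ guarantee. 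Your closing paragraph is the only wobble: having already verified that plain Bernstein suffices under the given hypothesis, there is no ``$(e-2)$-versus-$4/3$ gap'' left to close, so the appeal to the sharper $e^{\lambda}-1-\lambda\le(e-2)\lambda^2$ Chernoff variant is unnecessary for this statement (it explains where the constant in the hypothesis originates --- the cited source proves the bound via Freedman's martingale inequality, which is needed there because the exploration probabilities can be history-dependent --- but your independent-Bernoulli setting does not require it). What your route buys over the paper's is a genuinely elementary, self-contained derivation that avoids the martingale machinery of the cited lemma, at no cost in the stated constants.
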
 
\begin{lemma}\label{lemma:2}{(Theorem 3 in \cite{auer2002finite} and Theorem 3.3 in \cite{feng2019intrinsic})}
	For the $\varepsilon$-Greedy algorithm with $\varepsilon = \min\{1,cK/t\}$, for any arm $i \in [K], t > K$, we have
	\begin{align*}
	\Pr \left( |\hat{\mu}_i(t) - \mu_{i}| \geq  \frac{\Delta_i}{3} \right) &\leq x_t e^{-x_t/5} + \frac{18}{\Delta_i^2}e^{-\frac{\Delta_i^2 \lfloor x_t \rfloor}{18}},\\
	\Pr \left( |\hat{\mu}_1(t) - \mu_{1}| \geq  \frac{\Delta_i}{3} \right) &\leq x_t e^{-x_t/5} + \frac{18}{\Delta_i^2}e^{-\frac{\Delta_i^2 \lfloor x_t \rfloor}{18}},
	\end{align*}
	where $x_t = \frac{1}{2K} \sum_{t=1}^{t} \varepsilon_\tau$. 
	If  $c \geq \max \frac{36}{\Delta_i}$, the sum of this probability up to $T$ has the following upper bound:
	\begin{align*}
	(\frac{c}{2} + \frac{18}{\Delta_i^2}) \log T + c(K+\frac{\pi^2}{\Delta_i^2}) + \frac{18}{\Delta_i^2}.
	\end{align*}  
	
\end{lemma}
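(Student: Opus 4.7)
The plan is to follow the now-standard two-phase decomposition of the $\varepsilon$-Greedy analysis due to Auer, Cesa-Bianchi, and Fischer. First I would split each round into an \emph{exploration} event (occurring with probability $\varepsilon_\tau$, after which an arm is drawn uniformly) and a \emph{greedy} event. Letting $N_i^R(t)$ denote the number of pure-exploration pulls of arm $i$ up to time $t$, linearity gives $\mathbb{E}[N_i^R(t)] = \tfrac{1}{K}\sum_{\tau=1}^t \varepsilon_\tau = 2x_t$, and $n_i(t) \geq N_i^R(t)$ always. Crucially, the reward samples observed on each pull of arm $i$ are i.i.d.\ from arm $i$'s distribution regardless of how that pull was chosen, because only the \emph{selection} rule depends on the history, not the reward-generating distribution.

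Next I would decompose the deviation event through the auxiliary event $\{N_i^R(t) \geq \lfloor x_t \rfloor\}$. On its complement I would apply a multiplicative Chernoff / Bernstein bound to the sum of independent Bernoullis driving $N_i^R(t)$, yielding a factor of the form $e^{-x_t/5}$; the additional $x_t$ prefactor in the stated bound appears when one makes the concentration uniform over the small integer values $k < \lfloor x_t \rfloor$ via a union bound (one term per value of $k$). On the good event, conditioning on $n_i(t) = n \geq \lfloor x_t \rfloor$ reduces the problem to Hoeffding's inequality (Fact~\ref{fact:1}) applied to $n$ i.i.d.\ samples, giving $\Pr(|\hat\mu_{i,n} - \mu_i| \geq \Delta_i/3) \leq 2 e^{-2n\Delta_i^2/9}$. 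A peeling/union bound over $n = \lfloor x_t\rfloor, \lfloor x_t\rfloor+1, \dots$ collapses the geometric series $\sum_{n\geq \lfloor x_t\rfloor} e^{-c n \Delta_i^2}$ into $\tfrac{18}{\Delta_i^2}\,e^{-\Delta_i^2 \lfloor x_t\rfloor/18}$ after absorbing the $1-e^{-c\Delta_i^2}$ denominator into a slightly looser exponent. The identical argument applied to arm $1$ (which is also uniformly selected during exploration) gives the symmetric bound, completing the first half of the lemma.

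For the summed statement I would substitute $\varepsilon_\tau = \min\{1, cK/\tau\}$, which gives $x_t \geq \tfrac{c}{2}(\log t + 1) - O(1)$ for $t > K$, and then bound the two terms separately. The Hoeffding term $\tfrac{18}{\Delta_i^2}\,t^{-c\Delta_i^2/36}$ contributes $\tfrac{18}{\Delta_i^2}\log T$ under the assumption $c \geq 36/\Delta_i$ (which makes the exponent large enough that $\sum_t t^{-c\Delta_i^2/36}$ is a convergent $\pi^2/\Delta_i^2$-type residual). The Chernoff term $x_t\,e^{-x_t/5} \lesssim (c/2)(\log t)\,t^{-c/10}$ contributes the $(c/2)\log T$ leading piece plus $c K$ boundary terms from the initial regime $\varepsilon_\tau = 1$. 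Standard $\sum 1/t \leq \log T + 1$ and $\sum 1/t^2 \leq \pi^2/6$ bounds then assemble everything into $(c/2 + 18/\Delta_i^2)\log T + c(K + \pi^2/\Delta_i^2) + 18/\Delta_i^2$.

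The main obstacle, and the reason this lemma is usually cited rather than re-derived, is the subtle handling of Step~2: $\hat\mu_i(t)$ averages over \emph{all} pulls of arm $i$, not only the random-exploration pulls, yet the high-probability lower bound we control is only on $N_i^R(t)$. The clean resolution is that, conditional on the multiset of sample indices assigned to arm $i$, the reward samples are i.i.d.; therefore the peeling over $n_i(t) \in \{\lfloor x_t\rfloor, \dots, t\}$ is legitimate. The remaining challenge is purely bookkeeping: tracking the constants $5$, $18$, and $1/18$ through the Chernoff, Hoeffding, and geometric-series steps so that they match the stated form.
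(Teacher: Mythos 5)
The paper does not prove this lemma: it is imported verbatim (as the parenthetical citation indicates) from Theorem 3 of \cite{auer2002finite} and Theorem 3.3 of \cite{feng2019intrinsic}, so there is no in-paper argument to compare against. Your sketch reconstructs exactly the standard proof from those sources: split each round into an exploration and a greedy event, lower-bound the exploration count $N_i^R(t)$ (mean $2x_t$) via Bernstein to get the $e^{-x_t/5}$ factor with the $x_t$ prefactor coming from the sum over the small values $n\le\lfloor x_t\rfloor$, and peel over $n_i(t)=n\ge\lfloor x_t\rfloor$ with Hoeffding (Fact~\ref{fact:1}); your constant-tracking for the second term is right, since $\sum_{n\ge m}2e^{-2n\Delta_i^2/9}\le\frac{18}{\Delta_i^2}e^{-\Delta_i^2 m/18}$. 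So the approach is the correct one and matches the cited proofs.

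Two soft spots are worth tightening. First, your resolution of the dependence issue (``conditional on the multiset of sample indices assigned to arm $i$, the rewards are i.i.d.'') is not quite the right mechanism, because which indices get assigned to arm $i$ is itself reward-dependent; the clean argument is the rewards-table one: $\hat\mu_{i,n}$ is a function of the first $n$ entries of arm $i$'s i.i.d.\ reward stack, and $N_i^R(t)$ depends only on the independent exploration coins, so $\Pr(N_i^R(t)\le n\mid\hat\mu_{i,n}\ge\mu_i+\Delta_i/3)=\Pr(N_i^R(t)\le n)$, which is what licenses both the Bernstein step and the peeling. Second, your accounting of the summed bound is internally inconsistent: a term whose series over $t$ converges cannot also contribute the $\Theta(\log T)$ piece, and under the stated condition $c\ge 36/\Delta_i$ the Hoeffding term decays like $t^{-c\Delta_i^2/36}$ with exponent only guaranteed to be at least $\Delta_i$ (not at least $1$), so summing it to $\frac{18}{\Delta_i^2}\log T+O(1)$ really needs $c\ge 36/\Delta_i^2$ or a rescaled $\varepsilon_t$ as in \cite{auer2002finite}. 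That looseness is inherited from the lemma statement itself rather than introduced by you, but a self-contained write-up would have to resolve it.
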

\begin{lemma}{(lemma 2.14,2.16 in \cite{agrawal2017near})}\label{lemma:4}
	For the Thompson Sampling algorithm, if we choose $x_i = \mu_{i} + \frac{\Delta_i}{3}, y_i = \mu_{1} - \frac{\Delta_1}{3}$, then 
	\begin{align*}
	\sum_{t=1}^{T} \Pr(I_t = i, E_i^\mu(t), E^{\theta}_i(t)) &\leq (4e^{11} + 20)P_i(T) + \frac{4}{\Delta_i^2}, \\
	\sum_{t=1}^{T} \Pr(I_t = i, E_i^\mu(t), \overline{E^{\theta}_i(t)}) &\leq P_i(T) + \frac{1}{\Delta_i^2},	
	\end{align*}
	where $P_i(T) = \frac{18\log (T \Delta_i^2)}{\Delta_i^2}$.
\end{lemma}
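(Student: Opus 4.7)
The plan is to adapt the Thompson Sampling analysis of Agrawal and Goyal to our drifted-reward setting. Because both events $E_i^\mu(t)$ and $E_i^\theta(t)$ are already formulated in terms of the drifted empirical mean $\bar{\mu}_i(t)$ and the Gaussian sample $\theta_i(t)$ whose posterior mean is $\bar{\mu}_i(t)$, most of the concentration steps are insensitive to the drift; only one auxiliary observation about the sign of the cumulative drift on arm $1$ is needed to port the bounds verbatim.

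For the second inequality (the ``unlikely-sample'' term), I would first observe that on $E_i^\mu(t) \cap \overline{E_i^\theta(t)}$ we have $\theta_i(t) - \bar{\mu}_i(t) > y_i - x_i = \Delta_i/3$. Since $\theta_i(t) \mid \mathcal{F}_{t-1} \sim \mathcal{N}(\bar{\mu}_i(t), 1/(n_i(t)+1))$, Fact \ref{fact:2} gives
\begin{equation*}
\Pr\bigl(\overline{E_i^\theta(t)} \,\big|\, \mathcal{F}_{t-1},\, E_i^\mu(t)\bigr) \;\leq\; \tfrac{1}{2}\exp\!\bigl(-(n_i(t)+1)\Delta_i^2/18\bigr).
\end{equation*}
I would then split the time indices by whether $n_i(t) < P_i(T)$ or $n_i(t) \geq P_i(T)$. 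The first regime contributes at most $P_i(T)$ plays of arm $i$ in total; in the second regime the Gaussian tail is at most $1/(2T\Delta_i^2)$ per step, summing to $\leq 1/\Delta_i^2$, which yields the bound $P_i(T) + 1/\Delta_i^2$.

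For the first inequality (the ``typical-sample'' term), I would follow the Agrawal--Goyal coupling strategy. The pointwise identity to establish is
\begin{equation*}
\Pr\bigl(I_t = i,\, E_i^\theta(t) \mid \mathcal{F}_{t-1},\, E_i^\mu(t)\bigr) \;\leq\; \frac{1-p_{i,t}}{p_{i,t}}\,\Pr\bigl(I_t = 1,\, E_i^\theta(t) \mid \mathcal{F}_{t-1},\, E_i^\mu(t)\bigr),
\end{equation*}
which exploits that $\theta_1(t)$ is conditionally independent of $(\theta_j(t))_{j\neq 1}$ given $\mathcal{F}_{t-1}$ and that on $E_i^\theta(t)$ the event $\theta_1(t) \geq y_i$ forces arm $1$ to beat arm $i$. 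Summing over $t$ and indexing by the $k$-th play of arm $1$ at time $\tau_k$, the total reduces to bounding $\sum_{k\geq 0} \mathbb{E}[(1-p_{i,\tau_k+1})/p_{i,\tau_k+1}]$. Splitting at $k = P_i(T)$, Gaussian anti-concentration of $\theta_1$ controls the sum for small $k$ (producing the constant $4e^{11}$), while Hoeffding combined with Gaussian concentration handles $k \geq P_i(T)$ (producing the remaining $20\,P_i(T) + 4/\Delta_i^2$).

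The main obstacle is the expectation bound on $\mathbb{E}[1/p_{i,\tau_k+1}]$ for small $k$, where $p_{i,t}$ can come arbitrarily close to $0$ if $\bar{\mu}_1(t)$ sits far below $y_i = \mu_1 - \Delta_i/3$. The delicate part of porting this step to the drifted setting is to verify the monotonicity $\bar{\mu}_1(t) \geq \hat{\mu}_1(t)$; this follows from Assumption \ref{assumption:1}, since $f_t(0)=0$ and $f_t$ is non-decreasing with non-negative compensation, so $B_1(t) \geq 0$. Consequently the shift in the Gaussian posterior only increases the probability that $\theta_1(t) \geq y_i$, so each $p_{i,t}$ is weakly larger than in the no-drift case. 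Hence the Agrawal--Goyal bounds on $\mathbb{E}[(1-p_{i,\tau_k+1})/p_{i,\tau_k+1}]$ transfer without modification, and no drift-dependent term enters the final inequality.
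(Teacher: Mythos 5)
Your proposal is sound, but note that the paper itself does not prove this lemma at all: it is stated as an imported result from Agrawal and Goyal, and the main text simply asserts that ``their analysis will not be affected by the reward drift.'' What you have done is reconstruct the Agrawal--Goyal argument and, more importantly, supply the one verification that assertion actually requires. Your treatment of the second inequality is clean and genuinely drift-free, since both $E_i^\mu(t)$ and $E_i^\theta(t)$ are phrased relative to the posterior mean $\bar{\mu}_i(t)$, so the Gaussian tail bound $\tfrac{1}{2}\exp(-(n_i(t)+1)\Delta_i^2/18)$ and the split at $n_i(t)\geq P_i(T)$ go through verbatim. For the first inequality, you correctly isolate the single point where the drift could enter, namely the bound on $\mathbb{E}[1/p_{i,\tau_k+1}]$, which in the original analysis relies on the distribution of the unbiased empirical mean of arm $1$; your observation that compensation is always non-negative and $f_t$ is non-decreasing with $f_t(0)=0$, so $B_1(t)\geq 0$ and hence $\bar{\mu}_1(t)\geq\hat{\mu}_1(t)$ pointwise, makes $p_{i,t}$ only larger than its no-drift counterpart and lets the expectation bounds transfer by stochastic domination. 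This is exactly the justification the paper omits, and it is worth recording (the same telescoping works because $p_{i,t}$ and $B_1(t)$ change only when arm $1$ is pulled). One small point: the lemma statement writes $y_i=\mu_1-\Delta_1/3$, which is vacuous since $\Delta_1=0$; your reading $y_i=\mu_1-\Delta_i/3$, giving $y_i-x_i=\Delta_i/3$, is the intended one.
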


\noindent\textbf{Details of inequality \eqref{bound:B}}
    \begin{align}
	B_i(t) &= \sum_{\tau =1}^{t} b_\tau \mathbb{I}(E_\tau^i = 1) \nonumber\\
	&\leq \sum_{\tau =1}^{t} l_\tau \sqrt{\frac{2\log \tau}{n_{i}(\tau)}}\mathbb{I}(E_\tau^i = 1) \nonumber\\
	&\leq \sum_{\tau =1}^{t} l \sqrt{\frac{2\log t}{n_{i}(\tau)}}\mathbb{I}(E_\tau^i = 1) \nonumber\\
	&= \sum_{m =1 }^{n_i(t)} l \sqrt{\frac{2\log t}{m}}  \leq 2l \sqrt{2n_i(t) \log t}, \nonumber
	\end{align}	
	where the last inequality is due to $\sum_{m=1}^{n} \frac{1}{\sqrt{m}} \leq 2\sqrt{n}$.
\begin{proof}[\bf Proof of Lemma \ref{lemma:3}]
	A compensation will be incurred for choosing arm $i$ at time $t$ only when
         \begin{align*}
	i \neq \arg \max_j \bar{\mu}_j(t) ~ \text{and} ~ i = \arg \max_j \theta_j(t)
	\end{align*}
	with  $\theta_j(t) \sim \mathcal{N}(\bar{\mu}_j(t),\frac{1}{n_j(t)+1})$. This can happen because distributions of different arms overlap, and in particular, there exists at least one arm $j$ such that $\theta_i(t) \geq \theta_j(t)$ while $\bar{\mu}_i(t) \leq \bar{\mu}_j(t) $. We then have a bound on the probability of compensation:  	
	\begin{align*}
	&~~~\Pr(I_t \neq G_t, I_t = i|\mathcal{F}_{t-1}) \\
	&\leq \Pr\left(\theta_i(t) \geq \theta_j(t),\bar{\mu}_i(t) \leq \bar{\mu}_j(t) | \mathcal{F}_{t-1}\right).
	\end{align*}
	Since $\theta_i(t)$ and $\theta_j(t)$ are independent samples from $\mathcal{N}(\bar{\mu}_i(t),\frac{1}{n_i(t)+1})$ and $\mathcal{N}(\bar{\mu}_k(t),\frac{1}{n_j(t)+1})$, the difference $Z = \theta_i - \theta_j$ follows a Gaussian distribution too:  
	\begin{align*}
	Z \sim \mathcal{N}(\bar{\mu}_i(t) - \bar{\mu}_j(t), \frac{1}{n_i(t) + 1} +\frac{1}{n_j(t)+1}).
	\end{align*}
	For simplicity of presentation, let $\bar{\mu}_{ij} =  \bar{\mu}_i(t) - \bar{\mu}_j(t)$ and $\sigma_{ij} = \sqrt{\frac{1}{n_i(t) + 1} +\frac{1}{n_j(t)+1}}$. 
	We have
	\begin{align*}
	&~~~\Pr\left(\theta_i \geq \theta_j,\bar{\mu}_i(t) \leq \bar{\mu}_j(t) | \mathcal{F}_{t-1}\right) \\
	&= \Pr(Z \geq 0, Z \sim \mathcal{N}(\bar{\mu}_{ij}, \sigma_{ij}^2)|\mathcal{F}_{t-1}) \\
	&= \Pr(Z-\bar{\mu}_{ij} \geq -\bar{\mu}_{ij},Z \sim \mathcal{N}(\bar{\mu}_{ij}, \sigma_{ij}^2)  |\mathcal{F}_{t-1}) \\
	&= \Pr(Z - \bar{\mu}_{ij} \geq -\frac{\bar{\mu}_{ij}}{\sigma_{ij}} \sigma_{ij},Z \sim \mathcal{N}(\bar{\mu}_{ij}, \sigma_{ij}^2)  |\mathcal{F}_{t-1}) \\ 
	& \leq \frac{1}{2} e^{-\frac{\bar{\mu}_{ij}^2}{2\sigma_{ij}^2}} \leq \frac{1}{2} e^{-\frac{\underline{\Delta}^2}{2\sigma_{ij}^2}} \leq \frac{1}{2} e^{-\frac{\underline{\Delta}^2 (n_i(t)+1)}{2}}, 
	\end{align*}
	where the first inequality is due to the concentration inequality for Gaussian distribution (Fact \ref{fact:2} in supplementary material). 
	Let $L = \frac{2\log T}{\underline{\Delta}^2} - 1$. When $n_i(t) \geq L$, it is straightforward to verify that $e^{-\frac{\underline{\Delta}^2 (n_i(t)+1)}{2}} \leq \frac{1}{T}$. The expected frequency of compensation for pulling arm $i$ can be bounded as follows: 
	\begin{align*}
	\mathbb{E}(m_i) &=\sum_{t=1}^{T} \Pr(I_t \neq G_t, I_t = i|\mathcal{F}_{t-1}) \\
	&\leq  \sum_{t=1}^{T} \Pr(I_t \neq G_t, I_t = i, n_i(t) \leq L|\mathcal{F}_{t-1})\\
	&~~~~~ + \sum_{t=1}^{T} \Pr(I_t \neq G_t, I_t = i, n_i(t) \geq L|\mathcal{F}_{t-1}) \\ 
	& \leq L + \sum_{t=1}^{T} \frac{1}{T} =\frac{2\log T}{\underline{\Delta}^2}. 
	\end{align*}
\end{proof}
\end{document}